
\typeout{IJCAI--21 Instructions for Authors}


\documentclass{article}
\pdfpagewidth=8.5in
\pdfpageheight=11in
\usepackage{ijcai21}

\usepackage{times}
\usepackage{soul}
\usepackage{url}
\usepackage[hidelinks]{hyperref}
\usepackage[utf8]{inputenc}
\usepackage[small]{caption}
\usepackage{graphicx}
\usepackage{amsmath}
\usepackage{amsthm}
\usepackage{booktabs}
\usepackage{algorithm}
\urlstyle{same}


\usepackage[all]{abaisero}

\usepackage[detect-weight=true,detect-inline-weight=math,separate-uncertainty=true]{siunitx}
\usepackage{etoolbox}
\robustify\bfseries

\usepackage[capitalize]{cleveref}
\usepackage{bm}
\usepackage{xfrac}
\usepackage{verbatim}
\usepackage[inline]{enumitem}
\usepackage{todonotes}

\usepackage{subcaption}

\usepackage{multirow}
\usepackage[noend]{algpseudocode}

\usepackage{tikz}
\usetikzlibrary{backgrounds,decorations.pathreplacing}


\newcommand\ones{\bar{1}}

\newcommand\iset{\mathcal{I}}
\newcommand\qset{\mathcal{Q}}
\newcommand\xset{\mathcal{X}}
\newcommand\zset{\mathcal{Z}}

\newcommand\nohist{\varepsilon}
\newcommand\notest{\varepsilon}

\newcommand\pomdp{^{(b)}}
\newcommand\psr{^{(p)}}
\newcommand\rpsr{^{(r)}}

\newcommand\core{^\dagger}

\newtheorem{lemma}{Lemma}
\newtheorem{theorem}{Theorem}
\newtheorem{corollary}{Corollary}
\newtheorem{proposition}{Proposition}




\pdfinfo{
/TemplateVersion (IJCAI.2021.0)
}

\title{Reconciling Rewards with Predictive State Representations
\footnote{Code available at \url{https://github.com/abaisero/rl-rpsr}.}}

\author{
Andrea Baisero
\and
Christopher Amato\\
\affiliations
Northeastern University, Boston, Massachusetts, USA\\
\emails
\{baisero.a, c.amato\}@northeastern.edu
}

\begin{document}

\maketitle

\begin{abstract}
  Predictive state representations (PSRs) are models of controlled non-Markov
  observation sequences which exhibit the same generative process governing
  POMDP observations without relying on an underlying latent state.  In that
  respect, a PSR is indistinguishable from the corresponding POMDP\@.  However,
  PSRs notoriously ignore the notion of rewards, which undermines the general
  utility of PSR models for control, planning, or reinforcement learning.
  Therefore, we describe a sufficient and necessary \emph{accuracy} condition
  which determines whether a PSR is able to accurately model POMDP rewards, we
  show that rewards can be approximated even when the accuracy condition is not
  satisfied, and we find that a non-trivial number of POMDPs taken from a
  well-known third-party repository do not satisfy the accuracy condition.  
  We propose \emph{reward-predictive state representations} (R-PSRs), a
  generalization of PSRs which accurately models both observations and rewards,
  and develop value iteration for R-PSRs.  We show that there is a mismatch
  between optimal POMDP policies and the optimal PSR policies derived from
  approximate rewards.  On the other hand, optimal R-PSR policies perfectly
  match optimal POMDP policies, reconfirming R-PSRs as accurate state-less
  generative models of observations and rewards.
\end{abstract}

\section{Introduction}

Predictive state representations (PSRs) are models of controlled observation
sequences which exhibit the same generative properties as partially observable
Markov decision processes
(POMDPs)~\cite{littman_predictive_2002,singh_predictive_2004}.  Compared to
POMDP models, PSRs lack an underlying latent state;  instead, the system state
is grounded in predicted likelihoods of future observations.  The structure of
PSRs only involves observable quantities, therefore learning a PSR model is
generally considered to be simpler than learning a latent-state model such as a
POMDP~\cite{wolfe_learning_2005}.
Hence, significant research effort has been collectively spent on PSR model
learning~\cite{singh_learning_2003,james_learning_2004,rosencrantz_learning_2004,wolfe_learning_2005,wiewiora_learning_2005,bowling_learning_2006,mccracken_online_2006,boots_closing_2011}.
Likewise, a number of control and reinforcement learning methods have been
successfully adapted to PSRs, e.g., policy
iteration~\cite{izadi_planning_2003}, incremental
pruning~\cite{james_planning_2004}, point-based value
iteration~\cite{james_improving_2006,izadi_point-based_2008}, and
Q-learning~\cite{james_planning_2004}.

While PSRs are a promising alternative to POMDPs for modeling observation
sequences, they notoriously lack the ability to model rewards appropriately.
In fact, we show that PSRs are able to represent only a specific subset of
reward functions representable by finite POMDPs.  
Prior work in PSR-based control can be split in two groups depending on how
reward modeling is addressed:
\begin{enumerate*}[label=(\alph*)]
  \item In the first group, a linear PSR reward function which encodes the
    appropriate task is directly given and/or assumed to
    exist~\cite{izadi_planning_2003,izadi_point-based_2008,boots_closing_2011},
    which means that these methods cannot be used to solve problems which
    cannot be modeled by vanilla PSRs.  Further, directly designing PSR rewards
    which are not grounded to a latent state is extremely unintuitive, compared
    to designing POMDP rewards grounded on state.
  \item In the second group, a \emph{reward-aware} PSR (RA-PSR) models rewards
    by making them explicitly observable and an integral part of the agent's
    observable history~\cite{james_planning_2004,james_improving_2006}.
    However, in partially observable sequential decision making problems such
    as POMDPs, rewards are a construct meant to rank agent behaviors based on
    how well they solve a task, rather than an intrinsic aspect of the
    environment like states and observations.  In fact, it is not uncommon for
    rewards to be available during offline training but not during online
    execution.  Further, an agent which is able to observe rewards will be able
    to condition its behavior based on rewards and/or infer the latent state
    using rewards, which is not generally allowed when solving POMDPs.
    Therefore, while RA-PSRs are able to represent reward functions which are
    not representable by vanilla PSRs, they are still unable to represent the
    full range of control tasks representable by POMDPs, which do not assume
    observable rewards.
    %
  %
\end{enumerate*}

In this work, we first develop the theory of PSR reward processes, and then a
novel extension of PSRs which is able to model any reward process without
making rewards explicitly observable by the agent.
Our contributions are as follows:
\begin{enumerate*}[label=(\alph*)]
  \item We derive a sufficient and necessary \emph{accuracy} condition which
    determines whether the rewards of a POMDP can be accurately represented by
    a PSR, and a linear approximation for when the accuracy condition is not
    satisfied;
  \item We develop \emph{reward-predictive state representations} (R-PSRs), a
    generalization of PSRs capable of representing the reward structure of any
    finite POMDP;
  \item We adapt Value Iteration (VI) to R-PSRs;
  \item We show that a non-trivial portion of common finite POMDPs, taken from
    a third-party repository, do not satisfy the accuracy condition; and
  \item We evaluate the optimal policies derived by POMDPs, PSRs and R-PSRs
    according to the other's reward structure, confirming that even the best
    linear reward approximations in PSRs may catastrophically alter the
    underlying task, while the reward structure of R-PSRs perfectly matches
    that of the original POMDPs. 
\end{enumerate*}
Our work represents a significant step towards being able to use PSRs to model
realistic environments which do not assume observable rewards, and opens the
door for more complicated and scalable methods able to exploit the structure of
the proposed R-PSR model.


\section{Background}


\paragraph{Notation} 
We use bracket notation $\left[ v \right]_i$ and $\left[ M \right]_{ij}$ to
index vectors and matrices.  $\left[ M \right]_{i:}$ and $\left[ M
\right]_{:j}$ respectively indicate the i-th row vector and the j-th column
vector of $M$.  We use $\Delta \xset$ to indicate the set of probability
distributions over set $\xset$, and boldface $\bm x$ to indicate a random
variable.
To avoid ambiguity, we will sometimes denote quantities associated with POMDPs
using superscript $\pomdp$, those associated with PSRs using $\psr$, and those
associated with R-PSRs using $\rpsr$.

\subsection{POMDPs}

A POMDP~\cite{cassandra_acting_1994} is a tuple $\langle \sset, \aset, \oset,
\tfn, \ofn, \rfn, \gamma \rangle$ constisting of: state, action, and
observation spaces $\sset$, $\aset$, $\oset$; transition function
$\tfn\colon\sset\times\aset\to\Delta\sset$; observation function
$\ofn\colon\sset\times\aset\times\sset\to\Delta\oset$; reward function
$\rfn\colon\sset\times\aset\to\realset$; and discount factor $\gamma\in\left[0,
1\right]$.
We focus on finite POMDPs, whose sets $\sset$, $\aset$ and $\oset$ are finite;
consequently, POMDP functions and related quantities can be represented as
matrices and vectors, e.g., the reward matrix $\left[ R \right]_{ij} \doteq
R(s=i, a=j)$.

\paragraph{Interactions, Histories, and Beliefs}
We define an \emph{interaction} $ao\in\aset\times\oset$ as an
action-observation pair representing a single exchange between agent and
environment, and its \emph{generative} matrix
$G_{ao}\in\realset^{|\sset|\times|\sset|}$ as $\left[ G_{ao} \right]_{ij}
\doteq \Pr(s'=i, o \mid s=j, a)$, which encodes both state transition and
observation emission probabilities.
A \emph{history} $h\equiv a_1o_1a_2o_2 \ldots$ is a sequence of interactions,
and the cumulative observable past.  We use \emph{string concatenation} to
denote the concatenation of histories and/or interactions, e.g.,  $h_1h_2$ or
$hao$.  We denote the space of all histories as
$\hset\doteq\left(\aset\times\oset\right)\kstar$, and the empty history as
$\nohist$.
A \emph{belief} $b\colon\hset\to\Delta\sset$ is the distribution over states
following history $h$, i.e., vector $\left[ b(h) \right]_i \doteq \Pr(s = i
\mid h)$.  We define the history reward function $R(h, a) \mapsto \Exp\left[
R(\bm s, a) \mid h \right] = b(h)\T \left[ R \right]_{:a}$.

\subsection{Predictive State Representations}

A PSR is a discrete-time controlled dynamical system which lacks the notion of
a latent state~\cite{littman_predictive_2002,singh_predictive_2004};  rather,
the system state is composed of the predictive probabilities of hypothetical
futures called \emph{tests}.

\paragraph{Tests and their Probabilities}
A \emph{test} $q\equiv a_1o_1a_2o_2\ldots$ is---like a history---a sequence of
interactions.  We denote the space of all tests as
$\qset\doteq\left(\aset\times\oset\right)\kstar$, and the empty test as
$\notest$.  Although histories and tests are structurally equivalent, they
differ semantically in that the former refer to the past, and the latter to
hypothetical futures.
We generalize the \emph{generative} matrix to tests via $G_q = \cdots
G_{a_2o_2} G_{a_1o_1}$.
The action and observation sequences associated with a test $q$ are denoted
respectively as $a_q \in\aset\kstar$ and $o_q\in\oset\kstar$.
A \emph{test probability} $p(q\mid h) \doteq \Pr(o_q \mid h, a_q)$ is the
probability of the test observations $o_q$ if the test actions $a_q$ are taken
from the history $h$ as a starting point.  Test probabilities are the core
quantity modeled by a PSR\@.  A \emph{linear} predictive state $p(h)
\in\realset^{|\qset\core|}$ (where $\qset\core$ is a \emph{core} set of tests,
defined later) is a representation of history $h$ such that test probabilities
are linear in $p(h)$, i.e., $p(q\mid h) = p(h)\T m_q$, where $m_q
\in\realset^{|\qset\core|}$ is the \emph{parameter vector} associated to test
$q$.
Littman and Sutton~\shortcite{littman_predictive_2002} show the vectorized form
of test probabilities based on beliefs (using a less general version of the
generative matrix $G_q$),
\begin{equation}
  p(q\mid h) = b(h)\T G_q\T \ones \,, \label{eq:psr}
\end{equation}
\noindent where the linear products represent an implicit marginalization over
the sequence of latent states.

\paragraph{Outcome Vectors}
The \emph{outcome} of a test $u(q)\in \left[0, 1\right]^{|\sset|}$ is a vector
indicating the test probabilities from each state as a starting point, i.e.,
$\left[ u(q) \right]_i \doteq \Pr(o_q \mid s=i, a_q)$.  Outcomes can be defined
recursively via the generative matrix,
\begin{align}
  u(\notest) &= \ones \label{eq:psr.outcome.init} \,, \\
  u(aoq) &= G_{ao}\T u(q) \,. \label{eq:psr.outcome.recursive}
\end{align}
\noindent Combining \Cref{eq:psr,eq:psr.outcome.init,eq:psr.outcome.recursive}
results in $p(q\mid h) = b(h)\T u(q)$, i.e., the test probability given a
history is the expectation of test probabilities given each state.
A set of tests is said to be \emph{linearly independent} iff the respective
outcome vectors are linearly independent, and any maximal set of linearly
independent tests is called a \emph{core set}, denoted as $\qset\core$.  While
there are infinite core sets, they all share the same size $|\qset\core|$, called
the PSR \emph{rank}, which is upper-bounded by $|\qset\core| \leq
|\sset|$~\cite{littman_predictive_2002}.

\paragraph{Predictive States}
The outcome matrix $U\in\left[0, 1\right]^{|\sset|\times|\qset\core|}$ of a core
set $\qset\core$ is the column-wise stacking of the core outcome vectors $\{
u(q) \mid q\in\qset\core \}$.  
By the definition of a core set, the outcome $u(q)$ of any non-empty test
$q\neq \notest$ is a linear combination of the core outcome matrix $U$ columns
(else the core set would not be maximal), i.e., $u(q) \in\colspace U$ and,
because $UU\PI$ is the projection onto $\colspace U$, then $u(q) = UU\PI u(q)$.
Consequently,
$p(q\mid h) = b(h) \T u(q) = b(h)\T UU\PI u(q) = p(h)\T m_q$,
where $p(h)\T \doteq b(h)\T U$ is the predictive state, and $m_q \doteq U\PI
u(q)$ is the parameter vector of $q$.  Each dimension of $p(h)$ is itself the
test probability of a core test, i.e., $\left[ p(h) \right]_i = p(q_i\mid h)$,
where $q_i$ is the $i^{th}$ core test.

\paragraph{Emissions and Dynamics}
Observation probabilities are $\Pr(o\mid h, a) = p(ao\mid h) = p(h)\T m_{ao}$,
while the predictive state dynamics are $p(hao) = \sfrac{ \left( p(h)\T M_{ao}
  \right) }{ \left( p(h)\T m_{ao} \right) }$, where $M_{ao}$ is the column-wise
  stacking of the extended core test parameters $\{ m_{aoq} \mid q\in\qset\core
  \}$.

\subsection{Value Iteration}\label{sec:vi}

Ultimately, we wish to compare POMDP, PSR and R-PSR models by comparing their
respective induced optimal policies.  Although a number of modern solution
methods could be used for this purpose, we employ simple value iteration (VI)
as an arguably necessary stepping stone, leaving more modern methods for future
work.
Value iteration (VI) is a family of dynamic programming algorithms which
estimate the optimal value function $V\opt{}\iter{k}(h)$ for increasing
horizons $k$, from which optimal actions can be derived.  Variants have been
developed for POMDPs and PSRs~\cite{james_planning_2004,boots_closing_2011}.  A
$k$-horizon policy tree $\pi$ is composed of an initial action $a_\pi$ and a
$(k-1)$-horizon sub-policy tree $\pi_o$ for each possible observation.  We
denote the space of $k$-horizon policies as $\Pi\iter{k}$.

Value iteration for POMDPs (POMDP-VI)~\cite{cassandra_acting_1994} is based on
the linearity of policy tree value functions in the belief state, $V_\pi(h) =
b(h)\T \alpha\pomdp_\pi$, where $\alpha\pomdp_\pi$ is the \emph{alpha vector}
representing the values of $\pi$.  The optimal $k$-horizon value function is
the maximum over policy value functions, $V\opt{}\iter{k}(h) =
\max_{\pi\in\Pi\iter{k}} b(h)\T \alpha\pomdp_\pi$, and is notably piecewise
linear and convex (PWLC)~\cite{smallwood_optimal_1973}.  POMDP-VI iteratively
computes the alpha vectors $\alpha_\pi$ of policy trees $\pi\in\Pi\iter{k}$ for
increasing horizons using
$\alpha_\pi = \left[ R\pomdp \right]_{:a_\pi}$ if $k=1$, and $\alpha_\pi =
\left[ R\pomdp \right]_{:a_\pi} + \gamma \sum_o G_{ao}\T \alpha_{\pi_o}$ if
$k>1$.
%
%
In practice, some alpha vectors are dominated by others, and can be pruned to
mitigate the exponential growth~\cite{cassandra_incremental_2013}.

Value iteration can be adapted to PSRs
(PSR-VI)~\cite{james_planning_2004,boots_closing_2011} under a linear PSR
reward function $R\psr(h, a) \doteq p(h)\T \left[ R\psr \right]_{:a}$.  Many
properties of POMDP-VI remain valid for PSR-VI, including the linearity of
value functions $V\psr_{\pi} = p(h)\T \alpha\psr_\pi$, and the PWLC property of
the optimal $k$-horizon value function.  PSR-VI iteratively computes the alpha
vectors $\alpha\psr_\pi$ of policy trees $\pi\in\Pi\iter{k}$ for increasing
horizons using
$\alpha_\pi = \left[ R\psr \right]_{:a_\pi}$ if $k=1$, and $\alpha_\pi = \left[
R\psr \right]_{:a_\pi} + \gamma \sum_o M_{ao} \alpha_{\pi_o}$ if $k>1$.
%

\section{The Failure of PSRs as Reward Models}

Given any finite POMDP, the respective PSR state $p(h)$ holds sufficient
information to represent the probability of future observations.  In this
section, we show that the same cannot be said about being able to represent
future rewards.  We develop theory regarding a PSR model's (in)ability to
accurately model POMDP rewards.

\subsection{Theory of PSR Reward Accuracy}

\begin{proposition}\label{thm:psr.failure}
  For any finite POMDP and its respective PSR, a (linear or non-linear)
  function $f(p(h), a) \mapsto R\pomdp(h, a)$ does not necessarily exist.
  %
  (proof in Appendix).
\end{proposition}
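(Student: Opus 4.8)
The plan is to prove the proposition by exhibiting a single finite POMDP for which no such $f$ can exist; since the statement only asserts that $f$ need \emph{not} exist, one counterexample suffices. The conceptual reason behind the counterexample is that, as derived in the background, the predictive state satisfies $p(h)\T = b(h)\T U$ with $U\in\realset^{|\sset|\times|\qset\core|}$ and $|\qset\core|\leq|\sset|$; whenever the PSR rank is \emph{strictly} less than $|\sset|$, the linear map $b(h)\mapsto p(h)$ fails to be injective, so there can be reachable histories $h_1\neq h_2$ with $p(h_1)=p(h_2)$ but $b(h_1)\neq b(h_2)$. If in addition $b(h_1)\T\left[ R\pomdp\right]_{:a}\neq b(h_2)\T\left[ R\pomdp\right]_{:a}$ for some action $a$, then any candidate $f$ would be forced to return two distinct values $R\pomdp(h_1,a)\neq R\pomdp(h_2,a)$ on the identical input $(p(h_1),a)=(p(h_2),a)$, which is impossible for a function, linear or not.

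Concretely, I would take a two-state POMDP with $\sset=\{s_1,s_2\}$, a single action $a$, and a single observation $o$, with deterministic dynamics that swap the two states ($s_1\mapsto s_2$ and $s_2\mapsto s_1$), rewards $R\pomdp(s_1,a)=0$ and $R\pomdp(s_2,a)=1$, and initial belief $b(\nohist)=(1,0)\T$. First I would observe that, because there is only one observation, every generative matrix is column-stochastic and hence $G_{ao}\T\ones=\ones$; by the outcome recursion in \cref{eq:psr.outcome.init,eq:psr.outcome.recursive} this forces $u(q)=\ones$ for every test $q$, so the unique core set is $\{\nohist\}$, the PSR rank is $1<2=|\sset|$, and $p(h)=b(h)\T\ones=1$ for every history $h$. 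Next I would compare the histories $h_1=\nohist$ and $h_2=ao$: the belief update gives $b(h_2)=(0,1)\T\neq b(h_1)$, yet $p(h_1)=p(h_2)=1$, while $R\pomdp(h_1,a)=0\neq 1=R\pomdp(h_2,a)$. Hence no function $f$ with $f(p(h),a)=R\pomdp(h,a)$ can exist for this POMDP, which establishes the proposition.

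I do not expect a substantive obstacle, as the construction is deliberately minimal; the only details needing care are (i) checking that both $h_1$ and $h_2$ are genuine histories of positive probability, which is immediate since the lone observation always occurs with probability $1$, and (ii) making the argument independent of the choice of core set, which follows from the fact that the PSR rank---and hence the constancy of $p(h)$---is the same for every maximal linearly independent set of tests. If a less degenerate witness is desired, the same reasoning goes through with a strictly mixing transition matrix and a non-deterministic initial belief, as long as two belief iterates stay distinct; but the deterministic-swap instance is the cleanest.
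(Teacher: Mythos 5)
Your proposal is correct and uses essentially the same argument as the paper: a POMDP with a singleton observation space forces $p(h)$ to be constant across all histories while $R\pomdp(h,a)$ still varies, so no function of $(p(h),a)$ can recover the rewards. The paper's appendix proof states this generically (large state and action spaces, one observation) rather than via your explicit two-state swap instance, but the key idea is identical; your only slip is calling the core set unique when any singleton test set qualifies, which you already render harmless by noting the rank-one argument is core-set independent.
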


For tractability reasons, we consider the family of linear PSR rewards
represented by matrix $R\psr\in\realset^{|\qset\core|\times|\aset|}$, such that
$R\psr(h, a) \doteq p(h)\T \left[ R\psr \right]_{:a}$.
Ideally, it would always be possible to express the true rewards $R\pomdp(h,
a)$ in such a way.  Unfortunately, this is not always possible.

\begin{theorem}[Accurate Linear PSR Rewards]\label{thm:psr.rewards}
  A POMDP reward matrix $R\pomdp$ can be accurately converted to a PSR reward
  matrix $R\psr$ iff every column of $R\pomdp$ is linearly dependent on the
  core outcome vectors (the columns of $U$).  If this condition is satisfied,
  we say that the PSR is \emph{accurate}, and $R\psr = U\PI R\pomdp$
  accurately represents the POMDP rewards.
  %
  (proof in Appendix).
\end{theorem}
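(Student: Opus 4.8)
The plan is to establish the two directions of the equivalence separately, relying only on the identity $p(h)\T = b(h)\T U$ from the predictive-state construction and on the fact, recalled above, that $UU\PI$ is the orthogonal projector onto $\colspace U$. Throughout, a conversion is \emph{accurate} exactly when $R\psr(h,a) = p(h)\T \left[ R\psr \right]_{:a}$ equals the history reward $R\pomdp(h,a) = b(h)\T \left[ R\pomdp \right]_{:a}$ for every history $h$ and action $a$.

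For the ``if'' direction, assume every column $\left[ R\pomdp \right]_{:a}$ lies in $\colspace U$, so that $UU\PI \left[ R\pomdp \right]_{:a} = \left[ R\pomdp \right]_{:a}$. Taking $R\psr \doteq U\PI R\pomdp$ and expanding column by column gives
\begin{equation*}
  R\psr(h,a) = p(h)\T \left[ R\psr \right]_{:a} = b(h)\T U U\PI \left[ R\pomdp \right]_{:a} = b(h)\T \left[ R\pomdp \right]_{:a} = R\pomdp(h,a)
\end{equation*}
for all $h$ and $a$, so this $R\psr$ is accurate.

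For the ``only if'' direction, suppose some matrix $R\psr$ already yields an accurate conversion. Substituting $p(h)\T = b(h)\T U$ into $R\psr(h,a) = R\pomdp(h,a)$ and rearranging gives $b(h)\T \bigl( U \left[ R\psr \right]_{:a} - \left[ R\pomdp \right]_{:a} \bigr) = 0$ for every history $h$ and action $a$. Granting the step discussed below --- that a vector orthogonal to all admissible beliefs must vanish --- this forces $\left[ R\pomdp \right]_{:a} = U \left[ R\psr \right]_{:a} \in \colspace U$ for every $a$, which is the claimed condition; moreover $U R\psr = R\pomdp$, and since the columns of $U$ (the core outcome vectors) are linearly independent, $U$ has full column rank and $U\PI U = I$, so in fact $R\psr = U\PI U R\psr = U\PI R\pomdp$ is the unique accurate PSR reward matrix.

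The one genuinely load-bearing step is the one just deferred: passing from ``$U \left[ R\psr \right]_{:a} - \left[ R\pomdp \right]_{:a}$ is orthogonal to every belief $b(h)$'' to ``$U \left[ R\psr \right]_{:a} - \left[ R\pomdp \right]_{:a} = 0$'' requires the relevant beliefs to linearly span $\realset^{|\sset|}$. I would settle it by reading accuracy as a requirement over all of $\Delta\sset$ (equivalently, noting that the point-mass beliefs concentrated on single states are admissible and already span $\realset^{|\sset|}$); the ``if'' direction, by contrast, needs no assumption at all on which beliefs arise.
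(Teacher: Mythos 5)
Your proof is correct and follows essentially the same route as the paper's: substitute $p(h)\T = b(h)\T U$, reduce accuracy to $b(h)\T\left(U R\psr - R\pomdp\right) = 0$ for all histories, conclude $U R\psr = R\pomdp$, and then use the full column rank of $U$ to get $R\psr = U\PI R\pomdp$. The only difference is cosmetic: you split the two directions explicitly and are more candid about the load-bearing spanning step, which the paper dispatches with a ``non-degenerate POMDP'' reachability assumption playing the same role as your appeal to beliefs spanning $\realset^{|\sset|}$.
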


As a direct consequence of \Cref{thm:psr.rewards}, a PSR can only accurately
model a sub-space of POMDP rewards $R\pomdp \in \{ UW \mid W \in
\realset^{|\qset\core|\times|\aset|} \}$.  Also, full-rank PSRs are always
accurate, while low-rank PSRs---often praised for their representational
efficiency---are most likely to suffer from this issue.
\Cref{thm:pomdp.rewards} shows the reverse problem does not exist.

\begin{corollary}\label{thm:pomdp.rewards}
  Assuming that a PSR can be represented by a finite POMDP, then
  any PSR rewards $R\psr$ are accurately represented by POMDP rewards $R\pomdp
  = UR\psr$.
  %
  (proof in Appendix).
\end{corollary}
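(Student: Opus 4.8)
The plan is to verify directly that the two reward functions agree on every history–action pair in their common domain, using only two identities already established in the Background. The first is the POMDP history-reward formula, which is linear in the belief: $R\pomdp(h,a) = b(h)\T[R\pomdp]_{:a}$. The second is the defining relation between predictive states and beliefs, $p(h)\T = b(h)\T U$, where $U$ is the outcome matrix of a chosen core set. The standing hypothesis — that the PSR is representable by a finite POMDP — is used exactly once, to guarantee that such a belief map $b\colon\hset\to\Delta\sset$ and outcome matrix $U$ exist and are compatible in this way; without an underlying POMDP there would be neither a reward matrix $R\pomdp$ nor a belief to speak of.

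With these in hand, I would set $R\pomdp \doteq U R\psr \in \realset^{|\sset|\times|\aset|}$ (the dimensions are consistent, since $U$ is $|\sset|\times|\qset\core|$ and $R\psr$ is $|\qset\core|\times|\aset|$) and compute, for arbitrary $h$ and $a$,
\[
  R\pomdp(h,a) = b(h)\T[U R\psr]_{:a} = b(h)\T U [R\psr]_{:a} = p(h)\T [R\psr]_{:a} = R\psr(h,a)\,,
\]
where the first equality is the POMDP history-reward formula applied to the matrix $U R\psr$, the third substitutes $p(h)\T = b(h)\T U$, and the last is the definition of the linear PSR reward. Since this holds for every history and action, $R\pomdp = U R\psr$ represents exactly the PSR reward function, which is the claim.

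I do not anticipate a genuine obstacle: this is the ``easy direction'' dual to \Cref{thm:psr.rewards}, and the computation above is essentially all there is. The only point deserving a sentence of care is why no analogue of the rank condition in \Cref{thm:psr.rewards} appears here — left-multiplication by $U$ is always available, whereas the reverse conversion requires each column of $R\pomdp$ to lie in $\colspace U$. As a consistency check one may note that, because the core outcome vectors are linearly independent, $U$ has full column rank, so $U\PI U = I$ and the round trip $U\PI R\pomdp = U\PI U R\psr = R\psr$ recovers the original PSR reward matrix, in agreement with \Cref{thm:psr.rewards}.
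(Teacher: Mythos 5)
Your proof is correct, but it takes a different route from the paper's. You verify the claim directly: setting $R\pomdp \doteq U R\psr$ and computing $b(h)\T [U R\psr]_{:a} = p(h)\T [R\psr]_{:a} = R\psr(h,a)$ via the identity $p(h)\T = b(h)\T U$, so the two reward functions agree on every history--action pair by construction. The paper instead argues indirectly through \Cref{thm:psr.rewards}: it observes that $R\pomdp = U R\psr$ automatically satisfies the accuracy condition (its columns lie in $\colspace U$ by construction), invokes that theorem to get an accurate POMDP-to-PSR conversion $\hat R\psr = U\PI R\pomdp$, and then uses full column rank of $U$ to show the roundtrip $R\psr \mapsto R\pomdp \mapsto \hat R\psr$ is the identity, concluding that the original PSR-to-POMDP conversion must have been accurate. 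Your direct computation is more elementary and self-contained --- it does not lean on \Cref{thm:psr.rewards} at all, and makes explicit exactly where the hypothesis of an underlying POMDP is used (to guarantee the existence of $b$ and the relation $p(h)\T = b(h)\T U$). The paper's version buys a cleaner narrative connection to the accuracy condition and reuses machinery already established; your closing roundtrip remark ($U\PI U = I$, so $U\PI R\pomdp = R\psr$) is in fact the central step of the paper's argument, so the two proofs meet there as a consistency check.
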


Next, we consider whether it is possible to formulate an appropriate
approximation of POMDP rewards for a non-accurate PSR.
Ideally, we care for PSR rewards which induce the optimal policy and/or values
most similar to those of the POMDP\@.  However, it is extremely challenging to
fully analyze the effects of rewards on policies (which is the control problem
itself); therefore, we will use reward errors as a proxy.
Note that, while this methodology is extremely simple, it is also imperfect,
since small reward errors may lead to large policy errors, while large reward
errors may lead to small policy errors.
While it is possible to consider preferences over histories/beliefs which
should result in a lower approximation error (e.g., the reachable
histories/beliefs), that kind of prior knowledge is not common.  Therefore, we
consider reward approximations where the approximation errors for all
histories/beliefs are weighted uniformly.

\begin{theorem}[Approximate Linear PSR Rewards]\label{thm:psr.rewards.approx}
  The linear approximation of POMDP rewards for non-accurate PSRs which results
  in the lowest reward approximation error is $R\psr \doteq U\PI R\pomdp$.
  %
  (proof in Appendix).
\end{theorem}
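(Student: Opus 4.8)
The plan is to cast the problem as a (weighted) linear least-squares projection, decoupled over actions. First I would make the error precise: since $R\pomdp(h,a) = b(h)\T [R\pomdp]_{:a}$ and $R\psr(h,a) = p(h)\T [R\psr]_{:a} = b(h)\T U [R\psr]_{:a}$, the pointwise error at belief $b\in\Delta\sset$ and action $a$ is $b\T\big([R\pomdp]_{:a} - U[R\psr]_{:a}\big)$, which is linear in $b$. Weighting the squared error uniformly over beliefs gives $\mathcal{E}(R\psr) = \sum_a \int \big(b\T\epsilon_a\big)^2 \,d\mu(b) = \sum_a \epsilon_a\T \Sigma\, \epsilon_a$, where $\epsilon_a \doteq [R\pomdp]_{:a} - U[R\psr]_{:a}$ and $\Sigma \doteq \int b b\T \,d\mu(b)$ is the (symmetric, positive semidefinite) second-moment matrix of the belief weighting. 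Crucially the objective decouples across the action-indexed columns, so it suffices to minimize $\epsilon_a\T\Sigma\,\epsilon_a$ over $[R\psr]_{:a}$ for each $a$ independently.

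Next I would solve each per-action subproblem. Writing $r\doteq[R\pomdp]_{:a}$, $w\doteq[R\psr]_{:a}$, we minimize the convex quadratic $(r-Uw)\T\Sigma(r-Uw)$; stationarity yields the normal equations $U\T\Sigma(r-Uw) = 0$. Since the core outcome vectors are linearly independent, $U$ has full column rank, so $U\PI = (U\T U)^{-1}U\T$ and the candidate is $w^\star = U\PI r$, for which $r - Uw^\star = (I - UU\PI)r$ is the component of $r$ orthogonal to $\colspace U$. I would then verify $w^\star$ satisfies the normal equations and is the unique minimizer (strict convexity of the objective in $Uw$, plus injectivity of $U$). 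Stacking columns gives $R\psr = U\PI R\pomdp$, and I would note this agrees with the exact formula of \Cref{thm:psr.rewards} whenever the accuracy condition holds, so the approximation is a genuine generalization, exact in the accurate case.

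The main obstacle is the simplex structure hidden in $\Sigma$: for beliefs drawn uniformly from $\Delta\sset$, $\Sigma$ is not a multiple of the identity (it is proportional to $I + \ones\ones\T$), so a priori the weighted minimizer need not be the ordinary pseudoinverse. The observation that rescues the statement is $\ones\in\colspace U$ — indeed $\ones = u(\notest)$ is an outcome vector, and $\colspace U$ is spanned by all outcome vectors, so $UU\PI\ones = \ones$. Hence $\ones\T(r - Uw^\star) = \big((I - UU\PI)\ones\big)\T r = 0$, so $U\T\ones\ones\T(r-Uw^\star) = 0$ and the rank-one term in $\Sigma$ contributes nothing to the normal equations; $w^\star = U\PI r$ remains optimal. (Alternatively, idealizing the weighting to be isotropic on $\realset^{|\sset|}$ makes $\Sigma\propto I$ and reduces everything to ordinary least squares, bypassing this point.) I would close by recording uniqueness: any other $R\psr$ changes $U R\psr$ by injectivity of $U$, strictly increasing $\mathcal{E}$.
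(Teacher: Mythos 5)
Your proposal is correct, and its computational core---decoupling the objective over actions, writing the normal equations of a convex quadratic, and solving them via $U\PI = (U\T U)^{-1}U\T$ using the full column rank of $U$---is the same as the paper's. Where you genuinely diverge is in the formalization of ``lowest reward approximation error.'' The paper's appendix proof simply minimizes the unweighted column residual $\tfrac12\left\| U\left[R\psr\right]_{:a} - \left[R\pomdp\right]_{:a}\right\|^2$, i.e.\ it weights the error at the vertex (deterministic) beliefs equally and never actually integrates over histories or beliefs, despite the main text motivating the criterion as uniform over all beliefs. You take that motivation literally, introduce the second-moment matrix $\Sigma = \int b\,b\T\,d\mu(b)$, and then must handle the fact that for the uniform measure on the simplex $\Sigma \propto I + \ones\ones\T$ is not isotropic. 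Your rescue is sound: $\notest\in\qset$, so maximality of the core set forces $\ones = u(\notest)\in\colspace U$, hence the residual $(I-UU\PI)\left[R\pomdp\right]_{:a}$ is orthogonal to $\ones$ and the rank-one term contributes nothing to the normal equations; together with $U\T\Sigma U \succ 0$ this gives the same unique minimizer $U\PI R\pomdp$. This buys something the paper does not establish, namely that the state-uniform and belief-uniform criteria agree here, closing a small gap between the objective the paper states and the one it optimizes. The one caveat worth recording is that for a general weighting $\mu$ whose second moment is not of the form $cI$ plus a term annihilated by the residual, $U\PI R\pomdp$ need not be optimal --- consistent with the paper's own remark about preferring reachable beliefs.
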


Notably, \Cref{thm:psr.rewards,thm:psr.rewards.approx} share the same
expression for $R\psr$, which is unsurprising since an accurate estimate is
just an errorless approximate estimate.
The ability to compute approximate rewards via \Cref{thm:psr.rewards.approx}
begs the question of whether they are accurate enough to serve as a basis for
control.  Unfortunately, the connection between PSR rewards and optimal policy
is not as straightforward and interpretable as in POMDPs, since the rewards are
encoded in terms of test probabilities rather than states.
To answer this question quantitatively, one can solve both the POMDP and the
approximate PSR, and compare the respective optimal policies/values directly;
however, this approach is very expensive.  Luckily,
\Cref{thm:psr.reconstruction} provides a simple qualitative approach which
allows one to interpret approximate PSR rewards directly by
\emph{reconstructing} equivalent POMDP rewards.

\begin{corollary}\label{thm:psr.reconstruction}
  $\tilde R\pomdp \doteq UU\PI R\pomdp$ is the reconstructed POMDP-form of the
  PSR approximation $R\psr$ of the true POMDP rewards $R\pomdp$.  $\tilde
  R\pomdp$ and $R\pomdp$ are equal iff the accuracy condition is satisfied.
  %
  (proof in Appendix).
\end{corollary}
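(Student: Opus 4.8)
The plan is to chain together the two reward conversions already established. The (approximate) POMDP\,$\to$\,PSR conversion of \Cref{thm:psr.rewards,thm:psr.rewards.approx} sends the true rewards to $R\psr = U\PI R\pomdp$, and the exact PSR\,$\to$\,POMDP conversion of \Cref{thm:pomdp.rewards} sends a PSR reward matrix $R\psr$ to the POMDP reward matrix $U R\psr$. I would therefore \emph{define} the reconstructed POMDP-form of $R\psr$ as the image of $R\psr$ under the latter map; substituting the former into the latter gives $\tilde R\pomdp = U R\psr = U U\PI R\pomdp$, which is exactly the claimed expression, so the first assertion is essentially a composition of the two known conversions.

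Next I would check that $\tilde R\pomdp$ genuinely reconstructs the reward behaviour encoded by $R\psr$, i.e.\ that it induces the same history reward function, so that the word ``reconstructed'' is justified. Using $p(h)\T = b(h)\T U$ together with the pseudoinverse identities $U U\PI U = U$ and $U\PI U U\PI = U\PI$, one obtains $b(h)\T \tilde R\pomdp = b(h)\T U U\PI R\pomdp = p(h)\T U\PI R\pomdp = p(h)\T R\psr$ for every history $h$; hence the POMDP with rewards $\tilde R\pomdp$ and the PSR with rewards $R\psr$ agree on all reward predictions, and re-applying the POMDP\,$\to$\,PSR map to $\tilde R\pomdp$ returns $U\PI U U\PI R\pomdp = U\PI R\pomdp = R\psr$, confirming $\tilde R\pomdp$ is a bona fide preimage.

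For the equivalence, I would use that $U U\PI$ is the projector onto $\colspace U$, so it fixes exactly the vectors lying in $\colspace U$. Consequently $\tilde R\pomdp = R\pomdp$ iff $U U\PI \left[ R\pomdp \right]_{:a} = \left[ R\pomdp \right]_{:a}$ for every action $a$, i.e.\ iff every column of $R\pomdp$ lies in $\colspace U$; by \Cref{thm:psr.rewards} this is precisely the accuracy condition, which closes the ``iff''.

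I expect the only real subtlety to be conceptual rather than computational: pinning down what ``reconstructed POMDP-form'' must mean for the statement to be non-vacuous, namely that it is the POMDP reward matrix produced by the exact inverse conversion of \Cref{thm:pomdp.rewards} and that it represents the same history rewards as $R\psr$. Once that is fixed, everything reduces to idempotence of $U U\PI$ and the standard identities $U U\PI U = U$ and $U\PI U U\PI = U\PI$.
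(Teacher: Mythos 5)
Your proposal is correct and follows essentially the same route as the paper, whose proof is simply the one-line observation that the claim follows by composing the POMDP-to-PSR conversion of \Cref{thm:psr.rewards.approx} with the PSR-to-POMDP conversion of \Cref{thm:pomdp.rewards}; you have merely written out that composition explicitly and added the (correct) projector argument $UU\PI R\pomdp = R\pomdp \iff \colspace R\pomdp \subseteq \colspace U$ to justify the ``iff'' via \Cref{thm:psr.rewards}.
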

Next, we provide a detailed case study demonstrating the theory developed here.
In \Cref{sec:evaluation} we show empirically that, if there is any
approximation error at all, then there is a high chance that the policy has
been altered catastrophically.

\subsection{A Case Study of Approximate PSR Rewards}\label{sec:casestudy}

\begin{figure}
  \centering
  \resizebox{\linewidth}{!}{%
    \begin{tikzpicture}

  \tikzset{node/.style={draw, thick, circle}}
  \tikzset{edge/.style={thick, ->, shorten >=2pt}}
  \tikzset{edge2/.style={edge, <->, shorten <=2pt}}

  \node[node] (0) {$0$};
  \node[node, below=of 0] (1) {$1$};

  \node[node, right=of 0] (2) {$2$};
  \node[node, below=of 2] (3) {$3$};

  \node[node, right=of 2] (4) {$4$};
  \node[node, below=of 4] (5) {$5$};

  \node[node, right=of 4] (6) {$6$};
  \node[node, below=of 6] (7) {$7$};

  \node[node, right=of 6] (8) {$8$};
  \node[node, below=of 8] (9) {$9$};

  \draw[edge] (0) to[out=10, in=170] node[anchor=south] {$0 \left( \frac{1}{2} \right)$} (2);
  \draw[edge] (2) to[out=190, in=-10] (0);
  \draw[edge2] (2) -- (4);
  \draw[edge2] (3) -- (5);
  \draw[edge2] (4) -- (6);
  \draw[edge2] (5) -- (7);
  \draw[edge] (7) to[out=10, in=170] (9);
  \draw[edge] (9) to[out=190, in=-10] node[anchor=north] {$0 \left( \frac{1}{2} \right)$} (7);

  \draw[edge] (1) -- node[anchor=west] {$1 \left( \frac{1}{2} \right)$} (2);
  \draw[edge] (6) -- (8);
  \draw[edge] (8) to[out=0, in=10] node[anchor=east] {$1 \left( \frac{1}{2} \right)$} (9);
  \draw[edge] (9) to[out=-10, in=-90, loop] node[anchor=west] {$0 \left( \frac{1}{2} \right)$} (9);

  \draw[edge] (0) to[out=170, in=90, loop] node[anchor=east] {$0 \left( \frac{1}{2} \right)$} (0);
  \draw[edge] (1) to[out=180, in=190] node[anchor=west] {$1 \left( \frac{1}{2} \right)$} (0);
  \draw[edge] (3) -- (1);
  \draw[edge] (8) -- node[anchor=east] {$1 \left( \frac{1}{2} \right)$} (7);

  \node[left=of 0] {loaded};
  \node[left=of 1] {unloaded};
  \draw[thick, decorate,decoration={brace, amplitude=3pt, raise=1.1cm}] (0.south) -- (0.north);
  \draw[thick, decorate,decoration={brace, amplitude=3pt, raise=1.1cm}] (1.south) -- (1.north);

  \node[below=of 1] {loading};
  \node[below=of 5] {travel};
  \node[below=of 9] {unloading};
  \draw[thick, decorate,decoration={brace, amplitude=3pt, raise=1.1cm}] (1.east) -- (1.west);
  \draw[thick, decorate,decoration={brace, amplitude=4pt, raise=1.1cm}] (7.east) -- (3.west);
  \draw[thick, decorate,decoration={brace, amplitude=3pt, raise=1.1cm}] (9.east) -- (9.west);

\end{tikzpicture}
  }

  \caption{Load/unload domain.  Rows indicate the agent's status (\emph{loaded} or
    \emph{unloaded}), and columns indicate the agent's position (observed as
    \emph{loaded}, \emph{travel}, and \emph{unloaded}).  Movements (\emph{left}
    and \emph{right}) are deterministic, and non-zero rewards are shown as ``$x
    \left( y \right)$'', where $x$ is the POMDP reward, and $y$ is the PSR
  approximation.  }\label{fig:loadunload}
\end{figure}
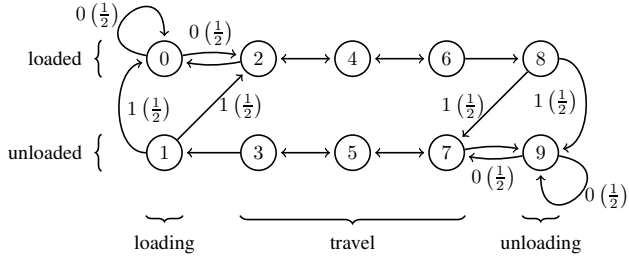

We use the \emph{load/unload} domain (POMDP in Appendix) shown in
\Cref{fig:loadunload} as a case study to show a catastrophic failure of
approximate PSR rewards.  The agent navigates a corridor of $5$ cells under
partial observability of its own position and whether it is carrying a load or
not.  Rewards are given for loading (when not loaded) at one end of the
corridor, and unloading (when loaded) at the other end.  The task is to keep
moving back and forth between one end of the corridor (loading) and the other
(unloading).

\begin{table}
  \centering
  \begin{tabular}{ll}
    \toprule
    Core test $q\in\qset\core$ & Outcome $u(q)\T$ \\
    \midrule
    left loading & $(1\; 1\; 1\; 1\; 0\; 0\; 0\; 0\; 0\; 0)$ \\
    right travel & $(1\; 1\; 1\; 1\; 1\; 1\; 0\; 0\; 0\; 0)$ \\
    right unloading & $(0\; 0\; 0\; 0\; 0\; 0\; 1\; 1\; 1\; 1)$ \\
    right travel, left loading & $(1\; 1\; 0\; 0\; 0\; 0\; 0\; 0\; 0\; 0)$ \\
    left travel, right travel  & $(0\; 0\; 0\; 0\; 1\; 1\; 1\; 1\; 0\; 0)$ \\
    \bottomrule
  \end{tabular}
  \caption{Core tests $\qset\core$ and respective outcome vectors for
  load/unload, found using a breadth-first variant of the search algorithm by
Littman and Sutton~\protect\shortcite{littman_predictive_2002}.}
  \label{tab:loadunload:core}
\end{table}

The domain has $|\sset|=10$ states, but its PSR rank is $|\qset\core|=5$.
\Cref{tab:loadunload:core} shows a core set $\qset\core$ and the respective
outcome vectors $u(q)$.  By \Cref{thm:psr.rewards} there is an entire subspace
of reward functions which cannot be accurately represented, and there is a
chance that the corresponding PSR model is not accurate;  we will verify that
this is indeed the case.
We note that the outcomes in \Cref{tab:loadunload:core} are also the columns of
matrix $U$, and that every pair of rows in $U$ (columns in
\Cref{tab:loadunload:core}) is identical.  This means that the PSR is unable to
make any distinction between states $0$ and $1$, $2$ and $3$, etc, which will
cause a problem, because the POMDP reward function specifically needs to
differentiate between states $0$ and $1$, and $8$ and $9$.

Next, we show the POMDP rewards $R\pomdp$ (note that only states $1$ and $8$
emit rewards),
\begin{equation}
  \resizebox{.89\linewidth}{!}{$
  \displaystyle
  R\pomdp = \begin{pmatrix}
    0.0 & 1.0 & 0.0 & 0.0 & 0.0 & 0.0 & 0.0 & 0.0 & 1.0 & 0.0 \\
    0.0 & 1.0 & 0.0 & 0.0 & 0.0 & 0.0 & 0.0 & 0.0 & 1.0 & 0.0 \\
  \end{pmatrix}\T
  $} \,,
\end{equation}
the PSR approximation (\Cref{thm:psr.rewards.approx}) $R\psr = U\PI R\pomdp$ ,
\begin{equation}
  R\psr = \begin{pmatrix}
    0.5 & -0.5 & -0.5 & 0.5 & 0.5 \\
    0.5 & -0.5 & -0.5 & 0.5 & 0.5 \\
  \end{pmatrix}\T \,,
\end{equation}
and the POMDP reconstruction (\Cref{thm:psr.reconstruction}) $\tilde R\pomdp =
U R\psr$ (note that states $0$, $1$, $8$ and $9$ emit rewards),
\begin{equation}
  \resizebox{.89\linewidth}{!}{$
  \displaystyle
  \tilde R\pomdp = \begin{pmatrix}
    0.5 & 0.5 & 0.0 & 0.0 & 0.0 & 0.0 & 0.0 & 0.0 & 0.5 & 0.5 \\
    0.5 & 0.5 & 0.0 & 0.0 & 0.0 & 0.0 & 0.0 & 0.0 & 0.5 & 0.5 \\
  \end{pmatrix}\T
  $} \,.
\end{equation}

The approximate rewards $R\psr$ are hard to interpret, and it is not obvious
that something is wrong.  However, looking at the reconstructed rewards $\tilde
R\pomdp$ (also shown in parentheses in \Cref{fig:loadunload}), we notice that
the first two and the last two columns---respectively corresponding to states
$0$ and $1$, and $8$ and $9$---have the same values, indicating that the PSR
has lost the ability to distinguish between those states.  After all, the only
reason why states $0$ and $1$ are separate states in load/unload is because they
lead to different rewards;  they are equivalent in all other aspects, hence the
PSR ``efficiently'' merges them.
More critically, the optimal behavior prescribed by $\tilde R\pomdp$ (read
$R\psr$) has changed catastrophically from that of $R\pomdp$.  With $R\pomdp$,
the optimal behavior is to move back-and-forth between the corridor ends; with
$\tilde R\pomdp$ (read $R\psr$), the optimal behavior is to reach one end and
stay there.

\section{Reward-Predictive State Representations}

In this section, we introduce \emph{reward-predictive state representations}
(R-PSRs), a generalization of PSRs which accurately models both the observation
processes \emph{and} the reward processes of all finite POMDPs.  The derivation
of R-PSRs resembles closely that of PSRs, with a few key differences:
\begin{enumerate*}[label=(\alph*)]
  \item a token action is used to unify observation and reward emissions,
  \item tests are extended by a final action used which switches between
    observation and reward emissions,
  \item test probabilities are generalized to include reward information.
\end{enumerate*}

\paragraph{Extended Action-Space}
We define an extended action-space $\zset\doteq\aset\cup\{\zeta\}$, where
$\zeta$ is a token action, and extend the reward function such that $R(s,
\zeta) = 1$ for every state $s$.  Token action $\zeta$ (with its reward) is a
construct which will allow us to define a single model of both observation and
reward emissions;  however, we are not changing the space of actions available
to the agent, and $\zeta$ cannot be used to interact with the environment, nor
can it be part of a history or test.

\paragraph{Intents and their Expectations}
In R-PSRs, the system state incorporates the expected rewards of hypothetical
futures we call \emph{intents}.
We define an \emph{intent} $qz$ as a test $q$ followed by an extended action
$z$, and the space of all intents as $\iset\doteq \qset\times\zset$.  An
\emph{intent reward} $r(qz\mid h) \doteq \Pr(q\mid h) R(hq, z)$ is the test
probability multiplied by the expected reward obtained by the intent action
following the concatenated history-test.  
Intent rewards are the core quantity modeled by R-PSRs\@, and a \emph{linear}
reward-predictive state $r(h) \in\realset^{|\iset\core|}$ (where $\iset\core$
is a \emph{core} set of intents, defined later) is a representation of history
$h$ such that intent rewards are linear in $r(h)$, i.e., $r(qz\mid h) = r(h)\T
m_{qz}$, where $m_{qz} \in\realset^{|\iset\core|}$ is the \emph{parameter
vector} associated to intent $qz$.
A focal property of $r(qz\mid h)$, provided by the token action $\zeta$, is
that it generalizes both test probabilities and history rewards,
\begin{align}
  R(h, \zeta) &= 1 &\implies &&r(q\zeta\mid h) &= \Pr(q\mid h) \,,
  \label{eq:rpsr:o} \\
  \Pr(\notest \mid h) &= 1 &\implies &&r(\notest a\mid h) &= R(h, a) \,.
  \label{eq:rpsr:r}
\end{align}
The intent reward function can be expressed in a vectorized form similar to
that of PSRs (proof in Appendix),
\begin{equation}
  r(qz\mid h) = b(h)\T G_q\T \left[ R\pomdp \right]_{:z} \,.
  \label{eq:rpsr:matrix.form}
\end{equation}

\paragraph{Outcome Vectors}
The \emph{outcome} of an intent $u(qz)\in \realset^{|\sset|}$ is a vector
indicating the intent rewards from each state as starting point, i.e., $\left[
u(qz) \right]_i = \Exp\left[ \rfn(\bm{s'}, a) \mid s=i, q \right]$.  Outcomes
can be defined recursively using the generative matrix,
\begin{align}
  u(\notest a) &= \left[ R\pomdp \right]_{:a} \,, \label{eq:rpsr:outcome.init} \\
  u(aoqz) &= G_{ao}\T u(qz) \,. \label{eq:rpsr:outcome.recursive}
\end{align}
\noindent Combining
\Cref{eq:rpsr:matrix.form,eq:rpsr:outcome.init,eq:rpsr:outcome.recursive}
results in $r(qz\mid h) = b(h)\T u(qz)$, i.e., the intent reward given a
history is the expectation of intent rewards given each state.
A set of intents is said to be \emph{linearly independent} iff the respective
outcome vectors are linearly independent, and any maximal set of linearly
independent intents is called a \emph{core set}, denoted as $\iset\core$.  R-PSR
core sets share similar properties to PSR core sets:  there are infinite core
sets which share the same size $|\iset\core|$, called the R-PSR rank, which is
upper-bounded by $|\iset\core|\leq |\sset|$.  Note that the PSR rank and the
R-PSR rank are not necessarily equal.
\Cref{alg:search:intent:dfs} is a depth-first search algorithm to find a core
set, analogous to the original PSR search
algorithm~\cite{littman_predictive_2002}, but adapted for R-PSRs (see
Appendix for a more efficient breadth-first variant).

\begin{algorithm}[tb]
  \caption{Depth-first search of a maximal set of linearly independent intents
  $\iset\core$.}\label{alg:search:intent:dfs}
  \begin{algorithmic}
    \Require $q\in\qset$ (optional, default $\notest$)
    \Require Independent intents $I\subset \iset$ (optional, default
    $\varnothing$)
    \Ensure Maximal set of independent intents which either belong to $I$ or
    are extensions of $q$.
    \Function{ISearchDFS}{$q, I$}
    \ForAll{$z\in\zset$}
    \If{$u(qz)$ independent of $\{u(i)\mid i\in I\}$}
    \ForAll{$a\in\aset$, $o\in\oset$}
    \State $I \gets \Call{ISearchDFS}{aoq, I \cup \{ qz \}}$
    \EndFor
    \EndIf
    \EndFor
    \State \Return $I$
    \EndFunction
  \end{algorithmic}
\end{algorithm}

\paragraph{Reward-Predictive States}
The outcome matrix $U\in\realset^{|\sset|\times|\iset\core|}$ of a core set
$\iset\core$ is the column-wise stacking of the core outcome vectors $\{ u(qz)
\mid qz\in\iset\core \}$.
By the definition of a core set, the outcome $u(qz)$ of any intent $qz$ is a
linear combination of the core outcome matrix $U$ columns (else the core set
would not be maximal), i.e., $u(qz) \in \colspace U$ and, because $UU\PI$ is
the projection onto $\colspace U$, then $u(qz) = UU\PI u(qz)$.  Consequently,
$r(qz\mid h) = b(h) \T u(qz) = b(h)\T UU\PI u(qz) = r(h)\T m_{qz}$ where
$r(h)\T \doteq b(h)\T U$ is the reward-predictive state, and $m_{qz} \doteq
U\PI u(qz)$ is the parameter vector of $qz$.  Each dimension of $r(h)$ is
itself the intent reward of a core intent, i.e., $\left[ r(h) \right]_i = r(qz
\mid h)$, where $qz$ is the $i^{th}$ core intent.

\paragraph{Emissions and Dynamics}
According to \Cref{eq:rpsr:o,eq:rpsr:r}, observation probabilities and reward
emissions are $\Pr(o\mid h, a) = r(ao\zeta\mid h) = r(h)\T m_{ao\zeta}$ and
$R(h, a) = r(\notest a\mid h) = r(h)\T m_{\notest a}$ respectively.
Therefore, the test-less intent parameters $\{ m_{\notest a} \mid a\in\aset \}$
constitute the columns of reward matrix $R\rpsr$.
The reward-predictive state dynamics are $r(hao) = \sfrac{ \left( r(h)\T M_{ao}
\right) }{ \left( r(h)\T m_{ao\zeta} \right) }$, where $M_{ao}$ is the
column-wise stacking of the extended core intent parameters $\{ m_{aoqz} \mid
qz\in\iset\core \}$
%
(proof in Appendix).

%



\paragraph{Value Iteration for R-PSRs}
Value iteration can be adapted to R-PSRs (R-PSR-VI).  Both the derivation and
the equation to compute the alpha vectors $\alpha\rpsr_\pi$ are identical to
those of PSR-VI~\cite{james_planning_2004,boots_closing_2011}, also shown in
\Cref{sec:vi};  the only difference being that parameters $R\rpsr$ and
$M\rpsr_{ao}$ are used, whose values correctly represent the POMDP/R-PSR
rewards.

\section{Evaluation}\label{sec:evaluation}

We perform empirical evaluations to confirm the theory developed in this work,
the issues with PSRs, and the validity of R-PSRs.  We show that
\begin{enumerate*}[label=(\alph*)]
  \item a non-trivial portion of finite POMDPs used in classical literature do
    not satisfy the accuracy condition of \Cref{thm:psr.rewards}, i.e., this is
    a common problem occurring in common domains;
  \item PSR-VI based on inaccurate approximate PSR rewards results in
    catastrophically sub-optimal policies, i.e., approximate rewards are not
    viable for control;
  \item R-PSRs are accurate reward models; and
  \item R-PSR-VI results in the same optimal policies as POMDP-VI\@.
\end{enumerate*}
%
%
Our evaluation involves a total of $63$ unique domains: $60$ are taken from
Cassandra's POMDP page~\cite{cassandra_pomdp_1999}, a repository of classic
finite POMDPs from the literature; $2$ are the well-known
\emph{load/unload}~\cite{meuleau_learning_1999} and
\emph{heaven/hell}~\cite{bonet_solving_1998}; and the last one is
\emph{float/reset}~\cite{littman_predictive_2002}.

\begin{table*}[tb]
  \centering
  \begin{tabular}{ccccccccc}
    \toprule
    & \emph{4x3}& \emph{heaven/hell} & \emph{iff} & \emph{line4-2goals} & \emph{load/unload} & \emph{paint} & \emph{parr} & \emph{stand-tiger} \\
    \midrule
    \phantom{rel-}$d_\infty$ & 1.0 & 1.0 & 48.93 & 0.6\phantom{0} & 0.5 & 1.33 & 1.0 &
    65.0\phantom{0} \\
    rel-$d_\infty$ & 1.0 & 1.0 & \phantom{0}0.75 &
    0.75 & 0.5 & 1.33 & 0.5 & \phantom{0}0.65 \\
    \bottomrule
  \end{tabular}
  \caption{PSR reward errors. Measure $d_\infty \doteq \|R\pomdp - \tilde
    R\pomdp\|_\infty$ is the $\ell_\infty$ distance between the POMDP rewards
    and their reconstruction.  Relative measure rel-$d_\infty \doteq \sfrac{
    d_\infty }{ \| R\pomdp \|_\infty }$ is normalized w.r.t.\ the scale of
  POMDP rewards.  The R-PSR reward errors (omitted) are all
zero.}\label{tab:psr:errors}
\end{table*}

\begin{table*}[tb]
\centering 
\begin{tabular}{llS[table-format=+2.1(1)]S[table-format=+2.1(1)]S[table-format=+2.1(1)]S[table-format=+2.1(1)]}
\toprule
   {Domain} & {Model} &      {Random} &       {POMDP-VI} &         {PSR-VI} &       {R-PSR-VI} \\
\midrule
\multirow{2}{*}{\emph{heaven/hell}} &   POMDP/R-PSR &   0.0 \pm 0.1 &   \bfseries 1.4 \pm 0.0 &   0.0 \pm 0.0 &   \bfseries 1.4 \pm 0.0 \\
&     PSR &  \bfseries -0.0 \pm 0.0 &  \bfseries -0.0 \pm 0.0 &  \bfseries -0.0 \pm 0.0 &  \bfseries -0.0 \pm 0.0 \\
\midrule
\multirow{2}{*}{\emph{line4}-2goals} &   POMDP/R-PSR &  \bfseries 0.4 \pm 0.0 &  \bfseries 0.4 \pm 0.0 &  \bfseries 0.4 \pm 0.0 &  \bfseries 0.4 \pm 0.0 \\
&     PSR &  \bfseries 4.0 \pm 0.0 &  \bfseries 4.0 \pm 0.0 &  \bfseries 4.0 \pm 0.0 &  \bfseries 4.0 \pm 0.0 \\
\midrule
\multirow{2}{*}{\emph{load/unload}} &   POMDP/R-PSR &  1.2 \pm 0.5 &  \bfseries 4.5 \pm 0.1 &  0.6 \pm 0.2 &  \bfseries 4.5 \pm 0.1 \\
&     PSR &  4.0 \pm 1.0 &  2.6 \pm 0.1 &  \bfseries 9.1 \pm 0.5 &  2.6 \pm 0.1 \\
\midrule
\multirow{2}{*}{\emph{paint}} &   POMDP/R-PSR &  -4.2 \pm 1.4 &  \bfseries 3.3 \pm 0.3 &  0.0 \pm 0.0 &  \bfseries 3.3 \pm 0.3 \\
&     PSR &  -3.2 \pm 1.0 &  1.0 \pm 0.9 &  \bfseries 3.3 \pm 0.0 &  1.0 \pm 1.0 \\
\midrule
\multirow{2}{*}{\emph{parr}} &   POMDP/R-PSR &  4.3 \pm 1.7 &  \bfseries 7.1 \pm 0.0 &  6.5 \pm 1.8 &  \bfseries 7.1 \pm 0.0 \\
&     PSR &  4.3 \pm 0.8 &  3.6 \pm 0.0 &  \bfseries 6.3 \pm 0.0 &  3.6 \pm 0.0 \\
\midrule
\multirow{2}{*}{\emph{stand-tiger}} &   POMDP/R-PSR &  -122.3 \pm 43.1 &    \bfseries 49.2 \pm 23.4 &  0.0 \pm 0.0 &    \bfseries 49.8 \pm 23.2 \\
&     PSR &  -122.7 \pm 26.4 &  -151.1 \pm 17.6 &  \bfseries 0.0 \pm 0.0 &  -150.2 \pm 18.0 \\
\bottomrule
\end{tabular}
\caption{Policy return estimates for each policy (columns) by each
  model (rows), where the identical POMDP and R-PSR rows were merged.  Means
and standard deviations shown as $\mu\pm\sigma$.  Bold text indicates, for each
model, the highest performing policy.}
\label{tab:results}
\end{table*}

%
We found that $8$ out of these $63$ domains---a non-trivial amount---do not
satisfy the accuracy condition.  \Cref{tab:psr:errors} shows the reward errors
between original and reconstructed POMDP rewards.  We note that all domains
where accurate PSR rewards are not possible also have high relative errors,
which implies that inaccurate PSRs are unlikely to be only \emph{mildly }
inaccurate.
VI is a fairly expensive algorithm which does not scale well with long planning
horizons and is thus not suitable for all problems; convergence to a steady
optimal value function was possible within a reasonable time-frame for only $6$
of the $8$ domains.  For each of these $6$ domains, we run $4$ different policies for
$1000$ episodes of $100$ steps: the uniform policy, and the policies
respectively obtained by POMDP-VI, PSR-VI, and R-PSR-VI\@.  Every
action-observation sequence is then evaluated by the POMDP, PSR, and R-PSR
reward models.  \Cref{tab:results} shows the return estimates for POMDPs, PSRs,
and R-PSRs;  Note that the POMDP and R-PSR rows are combined since they contain
the same values.

In this context, the POMDP represents the true task, which the PSR and R-PSR
also attempt to encode;  hence, the POMDP rows show how well each model's
respective policy solves the original task, while the PSR rows show how the
PSR's (inaccurate) encoded task evaluates each model's respective policy.
Notably, POMDPs and R-PSRs consistently agree with high numerical precision on
the return values of all trajectories, whereas PSRs consistently disagree.  The
POMDP-VI and R-PSR-VI policies achieve the same values throughout all
experiments, i.e., they consistently converge to the same policies.  Since the
POMDP encodes the true task, both POMDP-VI and R-PSR-VI represent the optimal
policy which solves that task.
With the singular exception of \emph{line4-2goals}, a trend appears where the
PSR-VI policy is sub-optimal and, in the case of \emph{load/unload}, is even
worse than the random policy (see \Cref{sec:casestudy}).  Vice versa, the
POMDP-VI/R-PSR-VI policies are sub-optimal according to the PSR model; notably,
the random policy performs better than the POMDP-VI/R-PSR-VI policies in 3 out
of 6 cases, which underlines just how much the task encoded by the PSRs have
diverged from their original form.

These results reaffirm not only the theory developed in this document, i.e.,
that PSRs are equivalent to POMDPs only in relation to their observation
process and not their reward process, but also that this is a common problem
which causes significant control issues.  Further, the results confirm the
validity of the developed R-PSR theory, and the equivalence between POMDPs and
R-PSRs.  Overall, this confirms that R-PSRs are better suited for control,
compared to vanilla PSRs.

\section{Conclusions}

In this work, we presented theoretical results on the accuracy of PSR rewards
relative to POMDP rewards, identified a sufficient and necessary condition
which determines whether a PSR can accurately represent POMDP rewards, and
derived the closest linear approximate rewards for non-accurate PSRs.  We also
showed empirically that reward approximations are likely to warp the implied
task in undesirable ways.  Therefore, we proposed R-PSRs, a generalization of
PSRs which encodes reward values jointly with test probabilities, and solves
the reward modeling problem of vanilla PSRs while remaining faithful to the
idea of grounding a system state representation on non-latent quantities, and
avoiding the pitfalls of observable rewards.
R-PSRs combine the benefits of both POMDPs and PSRs:  Compared to POMDPs,
R-PSRs do not rely on a latent state, which makes model learning easier and
grounded in non-latent quantities;  Compared to PSRs, R-PSRs are able to model
a wider range of tasks, and the reward structure of any finite POMDP\@.
In future work, we aim to adapt more learning and planning algorithms to
R-PSRs, and address the \emph{discovery} problem, i.e., the problem of learning
a core set of intents from sample interactions.
%

\section*{Acknowledgments}

This research was funded by NSF award 1816382.

\clearpage

\bibliographystyle{named}
\bibliography{references}

\clearpage
\appendix
\section{Lemmas, Theorems, and Proofs}\label{app:theorems}

For convenience, we repeat all the theorems already stated in the main
document before their proofs.

\subsection{PSR Theorems}\label{app:theorems:psr}

{
\renewcommand{\theproposition}{\ref{thm:psr.failure}}
\begin{proposition}
  %
  %
  For any finite POMDP and its respective PSR, a (linear or non-linear)
  function $f(p(h), a) \mapsto R\pomdp(h, a)$ does not necessarily exist.
\end{proposition}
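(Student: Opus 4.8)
The plan is to refute a universal claim, so it suffices to exhibit a single finite POMDP, an action $a\in\aset$, and two reachable histories $h_1,h_2\in\hset$ with $p(h_1)=p(h_2)$ but $R\pomdp(h_1,a)\neq R\pomdp(h_2,a)$. Once such an instance is in hand the conclusion is immediate: any map $f(p(h),a)\mapsto R\pomdp(h,a)$ — linear or not — would have to return two different values on the same input $(p(h_1),a)=(p(h_2),a)$, a contradiction. So the entire task reduces to building one good witness.

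First I would recast the requirement linear-algebraically. Since the predictive state is a linear image of the belief, $p(h)\T=b(h)\T U$, and $R\pomdp(h,a)=b(h)\T\left[R\pomdp\right]_{:a}$, two histories collide in predictive-state space exactly when $b(h_1)-b(h_2)$ lies in the null space of $U\T$ (equivalently, is orthogonal to every core outcome vector of $\qset\core$), and their rewards differ exactly when $b(h_1)-b(h_2)$ is \emph{not} orthogonal to the reward column $\left[R\pomdp\right]_{:a}$. Hence I need a POMDP in which some reward column falls outside $\colspace U$ — which is possible whenever the PSR rank is deficient, cf.\ the bound $|\qset\core|\leq|\sset|$ and \Cref{thm:psr.rewards} — together with two \emph{reachable} beliefs whose difference points along such an ``escaping'' direction.

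Then I would present an explicit minimal witness. Take $\sset=\{0,1\}$, a single action $a$, a single (uninformative) observation $o$, deterministic transitions sending both states to state $1$, initial belief $b(\nohist)$ concentrated on state $0$, and rewards $R\pomdp(0,a)=0$, $R\pomdp(1,a)=1$. Because there is only one observation, every outcome vector equals $\ones$ (one checks $G_{ao}\T\ones=\ones$ by stochasticity and induces the claim recursively), so the unique core set is $\qset\core=\{\notest\}$ with $U=\ones$; consequently $p(h)=b(h)\T\ones=1$ for every history, and $\left[R\pomdp\right]_{:a}=(0,1)\T\notin\colspace U=\operatorname{span}\{\ones\}$. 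Yet $b(\nohist)=(1,0)\T$ yields $R\pomdp(\nohist,a)=0$ while $b(ao)=(0,1)\T$ yields $R\pomdp(ao,a)=1$, so $f(1,a)$ would have to be both $0$ and $1$. (Alternatively, the \emph{load/unload} domain of \Cref{sec:casestudy} already provides such a witness, since its $U$ provably cannot separate states $0$ and $1$, and one may trace two histories landing respectively in those two states.)

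The only delicate point — and the one I expect to require the most care in a write-up — is \emph{reachability}: the argument is vacuous unless the two colliding beliefs genuinely arise from $b(\nohist)$ under admissible action/observation updates, not merely as arbitrary distributions. In the minimal construction this is trivial because the predictive state is constant and the two beliefs are produced by the histories $\nohist$ and $ao$; in a richer domain it would instead require explicitly unrolling the belief filter to certify that the chosen beliefs are attainable.
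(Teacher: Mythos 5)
Your proposal is correct and takes essentially the same approach as the paper: the paper's proof also uses a singleton observation space to collapse the predictive state to a constant while the belief (and hence the history reward) still varies. Your version is simply a fully explicit minimal instantiation of that idea, with the added (and welcome) care about reachability of the two colliding beliefs.
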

\addtocounter{proposition}{-1}
}

\begin{proof}[Proof by example]
  Consider a POMDP with a large state-space $|\sset|\gg 1$, a large
  action-space $|\aset| \gg 1$, but a singleton observation-space $|\oset|=1$.
  Because there is only one observation, every test probability is $p(q\mid h)
  = 1$, and any singleton test set is a core set $\qset\core = \{ q \}$.  The
  PSR state $p(h)$ is a $1$-dimensional unitary vector which is stationary with
  respect to the history; consequently, any hypothetical PSR reward function
  $f(p(h), a)$ is also stationary.  In contrast, the belief state $b(h)$ is not
  necessarily stationary, and neither is the POMDP reward function $R(h, a)$.
  Therefore, a PSR reward function $f(p(h), a)$ equivalent to a POMDP reward
  function $R\pomdp(h, a)$ does not necessarily exist.
\end{proof}

{
\renewcommand{\thetheorem}{\ref{thm:psr.rewards}}
\begin{theorem}[Accurate Linear PSR Rewards]
  %
  %
  A POMDP reward matrix $R\pomdp$ can be accurately converted to a PSR reward
  matrix $R\psr$ iff every column of $R\pomdp$ is linearly dependent on the
  core outcome vectors (the columns of $U$).  If this condition is satisfied,
  we say that the PSR is \emph{accurate}, and $R\psr = U\PI R\pomdp$
  accurately represents the POMDP rewards.
\end{theorem}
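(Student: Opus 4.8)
The plan is to reduce the theorem to an elementary solvability statement for the matrix equation $UX = R\pomdp$. Recall that the PSR reward function is $R\psr(h,a) = p(h)\T [R\psr]_{:a}$ and that the predictive state satisfies $p(h)\T = b(h)\T U$, so $R\psr(h,a) = b(h)\T U[R\psr]_{:a}$, whereas the POMDP history reward is $R\pomdp(h,a) = b(h)\T [R\pomdp]_{:a}$. Calling the conversion \emph{accurate} means that these two functions coincide for every history $h$ and action $a$, i.e.\ $b(h)\T\left( U[R\psr]_{:a} - [R\pomdp]_{:a} \right) = 0$ for all $h$ and $a$.

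First I would show that this identity of functions-of-belief is equivalent to the identity of vectors $U[R\psr]_{:a} = [R\pomdp]_{:a}$ for every $a$. One direction is immediate; for the converse I use that the beliefs $b(h)$ range over (a spanning subset of) the probability simplex $\Delta\sset$, which contains the standard basis vectors $e_i$, so a linear functional that vanishes on all admissible beliefs vanishes identically. This is the one place that needs care: the statement quantifies accuracy over belief states rather than only over the beliefs reachable from a particular initial distribution, so I would make that convention explicit, after which the equivalence is routine.

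Given the vector reformulation, accuracy says exactly that the linear system $UX = R\pomdp$ admits a solution $X = R\psr \in \realset^{|\qset\core|\times|\aset|}$. A standard linear-algebra fact then yields the stated criterion: this system is solvable iff every column $[R\pomdp]_{:a}$ lies in $\colspace U$, equivalently iff $[R\pomdp]_{:a}$ is a linear combination of (is linearly dependent on) the core outcome vectors. Equivalently, solvability holds iff $UU\PI R\pomdp = R\pomdp$, because $UU\PI$ is the orthogonal projector onto $\colspace U$ and it fixes a vector precisely when that vector already lies in $\colspace U$.

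Finally, assuming the condition holds, I would exhibit $R\psr = U\PI R\pomdp$ as the sought matrix: applying $U$ on the left gives $UU\PI R\pomdp = R\pomdp$ by the projector property just used, so $R\psr = U\PI R\pomdp$ reproduces the POMDP rewards exactly. I would also remark that this solution is in fact unique, since the columns of $U$ are linearly independent by definition of a core set, so $U$ has full column rank and $U\PI U = I$. The main obstacle is thus not any difficult computation but rather fixing the intended quantification over beliefs in the first step; once \emph{accuracy} is written as a vector identity, the remainder is standard linear algebra.
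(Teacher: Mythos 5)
Your proposal is correct and follows essentially the same route as the paper: both reduce accuracy to the matrix identity $UR\psr = R\pomdp$, characterize solvability as each column of $R\pomdp$ lying in $\colspace U$, and recover $R\psr = U\PI R\pomdp$ using the fact that $U$ has full column rank so $U\PI U = I$. The only (minor) divergence is in justifying that the functional identity over all histories forces the vector identity: you appeal to the quantification ranging over all of $\Delta\sset$ (which contains the standard basis vectors), whereas the paper assumes a non-degenerate POMDP in which every state is reachable so that the achievable beliefs suffice; your explicit flagging of this convention is a reasonable and arguably cleaner way to handle the same gap.
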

\addtocounter{theorem}{-1}
}

\begin{proof}
  Consider a linear PSR reward matrix $R\psr$.  In a POMDP, $b(h)\T R\pomdp
  \in\realset^{|\aset|}$ is the vector of expected rewards following history
  $h$.  In a PSR, $p(h)\T R\psr\in\realset^{|\aset|}$ represents the same
  quantity.  The PSR rewards are equivalent to the POMDP rewards iff the two
  vectors are equal for every possible history.  Consider the reward error
  \begin{align}
    \epsilon(h) &\doteq \frac{1}{2} \left\| p(h)\T R\psr - b(h)\T R\pomdp
    \right\|^2 \nonumber \\
    &= \frac{1}{2} \left\| b(h)\T U R\psr  - b(h)\T R\pomdp  \right\|^2
    \nonumber \\
    &= \frac{1}{2} \left\| b(h)\T \left( U R\psr  - R\pomdp  \right) \right\|^2
  \end{align}

  Assume that the POMDP is non-degenerate, in that every state is reachable (if
  not every state is reachable, then take into account the non-degenerate POMDP
  obtained by ignoring those states).  Then, every dimension of $b(h)$ is
  strictly positive for some history $h$, and the error $\epsilon(h)$ is
  guaranteed to be zero for every history iff $U R\psr = R\pomdp$, i.e., the
  columns of $R\pomdp$ are linear combinationf of the columns of $U$.  Because
  $U$ is full column rank, $U\PI U$ is the identity matrix, and
  \begin{align}
    U R\psr &= R\pomdp \\
    U\PI U R\psr &= U\PI R\pomdp \\
    R\psr &= U\PI R\pomdp
  \end{align}
\end{proof}

{
\renewcommand{\thecorollary}{\ref{thm:pomdp.rewards}}
\begin{corollary}
  %
  %
  Assuming that a PSR can be represented by a finite POMDP to begin with, then
  any PSR rewards $R\psr$ are accurately represented by POMDP rewards $R\pomdp
  = UR\psr$.
\end{corollary}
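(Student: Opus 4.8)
The plan is to mirror the proof of \Cref{thm:psr.rewards}, but run the implication in the reverse direction. By hypothesis there is a finite POMDP that represents the PSR, so the two share the same observation process; let $U$ be the outcome matrix of a core set of tests $\qset\core$ with respect to this POMDP, so that the predictive state is obtained from the belief by $p(h)\T = b(h)\T U$ for every history $h$ (the construction recalled in the Background). Given the linear PSR reward matrix $R\psr$, the expected reward after history $h$ is $p(h)\T R\psr = b(h)\T U R\psr$. Defining $R\pomdp \doteq U R\psr$ then gives $b(h)\T R\pomdp = p(h)\T R\psr$ for all $h$, i.e.\ the POMDP rewards reproduce the PSR rewards exactly.

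Next I would verify that $R\pomdp = U R\psr$ is an \emph{accurate} conversion in the sense of \Cref{thm:psr.rewards}, so that no information is lost and the round trip is consistent. This is immediate: every column of $R\pomdp = U R\psr$ is by construction a linear combination of the columns of $U$, hence lies in $\colspace U$, which is precisely the accuracy condition. Since $U$ has full column rank, $U\PI U$ is the identity, so converting back yields $U\PI R\pomdp = U\PI U R\psr = R\psr$, closing the loop between the two reward representations.

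The only delicate point — the step I would be most careful to justify — is the identity $p(h)\T = b(h)\T U$ in this setting, which presupposes that the POMDP's belief $b(h)$ and the PSR's predictive state $p(h)$ are linked through the very same outcome matrix $U$ of a common core set of tests; this is exactly what the assumption ``a PSR can be represented by a finite POMDP'' provides. With that in hand the argument reduces to the one-line substitution above plus the accuracy check. I would also remark that, unlike the forward direction in \Cref{thm:psr.rewards}, no nondegeneracy or state-reachability assumption is needed here, because we only need the equality of expected rewards to hold and never need to deduce a converse from it.
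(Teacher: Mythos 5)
Your proof is correct and follows essentially the same route as the paper's: both observe that $R\pomdp = UR\psr$ satisfies the accuracy condition by construction (its columns lie in $\colspace U$) and use the full-column-rank fact $U\PI U = I$ to show the roundtrip $R\psr \mapsto UR\psr \mapsto U\PI U R\psr$ returns $R\psr$. Your opening substitution $b(h)\T R\pomdp = b(h)\T U R\psr = p(h)\T R\psr$ is in fact a slightly more direct justification than the paper's, which infers accuracy of the PSR-to-POMDP conversion indirectly from the consistency of the roundtrip; the substance is the same.
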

\addtocounter{corollary}{-1}
}

\begin{proof}
  $R\pomdp = U R\psr$ satisfies the accuracy condition of
  \Cref{thm:psr.rewards}, therefore we can reconvert it back to a PSR reward
  matrix via $\hat R\psr = U\PI R\pomdp$.  Because $U$ is full column rank,
  $U\PI U$ is the identity matrix, and the rountrip conversion $R\psr \mapsto
  R\pomdp \mapsto \hat R\psr$ is always consistent, i.e.,
  \begin{align}
    \hat R\psr &= U\PI R\pomdp \nonumber \\
    &= U\PI U R\psr \nonumber \\
    &= R\psr \,.
  \end{align}
  Because the POMDP-to-PSR reward conversion is accurate, and the roundtrip
  conversion is also accurate, then the PSR-to-POMDP reward conversion must
  also be accurate.
\end{proof}

{
\renewcommand{\thetheorem}{\ref{thm:psr.rewards.approx}}
\begin{theorem}
  %
  %
  The linear approximation of POMDP rewards for non-accurate PSRs which results
  in the lowest reward approximation error is $R\psr \doteq U\PI R\pomdp$.
\end{theorem}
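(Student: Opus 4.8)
The plan is to set up the same per-history squared reward error $\epsilon(h)$ used in the proof of \Cref{thm:psr.rewards}, aggregate it uniformly over all histories, and minimize the resulting quadratic objective in $R\psr$ by recognizing it as an ordinary least-squares problem whose solution is the pseudoinverse projection. Concretely, recall that $\epsilon(h) = \tfrac{1}{2}\|b(h)\T(UR\psr - R\pomdp)\|^2$. Since we weight all histories/beliefs uniformly and beliefs range (densely, for a non-degenerate POMDP) over the simplex $\Delta\sset$, the natural aggregate objective is $E(R\psr) \doteq \tfrac{1}{2}\int \|b\T(UR\psr - R\pomdp)\|^2\,db$ over $b\in\Delta\sset$, or equivalently $\tfrac12\operatorname{tr}\!\big((UR\psr - R\pomdp)\T \Sigma (UR\psr - R\pomdp)\big)$ where $\Sigma \doteq \int b\, b\T\, db$ is the (positive definite) second-moment matrix of the uniform distribution over the simplex.

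Next I would minimize $E$. Because the objective decomposes column-by-column over actions, it suffices to minimize, for each action $a$, the quadratic form $\tfrac12(Ux - y)\T\Sigma(Ux - y)$ in $x \doteq [R\psr]_{:a}$ with $y \doteq [R\pomdp]_{:a}$. Setting the gradient to zero gives the normal equations $U\T\Sigma U x = U\T\Sigma y$. The key simplification is that $\Sigma$ is positive definite, so this is a genuinely weighted least-squares problem; however, I want to argue the solution is exactly $x = U\PI y$, independent of the weighting $\Sigma$. The clean way to see this: the minimizer of $\|b\T(Ux-y)\|$ over \emph{all} $b$ uniformly forces, in the non-degenerate case, that $Ux$ be the orthogonal projection of $y$ onto $\colspace U$ — indeed the earlier proof already established that if the accuracy condition holds then $x=U\PI y$ is errorless, and when it fails the best we can do is make $Ux$ as close to $y$ as possible in the relevant sense. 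The cleanest rigorous route is to observe that the residual $Ux - y$ contributes to $\epsilon(h)$ through $b(h)\T(Ux-y)$, and minimizing $\sum_h \epsilon(h)$ (uniformly) is minimizing $\|Ux-y\|$ componentwise weighted by how often each direction of $b$ appears; by symmetry of the uniform weighting over the simplex, the minimizer of the aggregate is the unweighted projection, giving $Ux = UU\PI y$ and hence $x = U\PI y$ since $U$ is full column rank.

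The main obstacle is making the ``uniform over all histories/beliefs'' phrase precise and showing the answer is robust to that choice. Two subtleties need care: (i) the set of reachable beliefs is generally a measure-zero subset of $\Delta\sset$, so one must commit to weighting over beliefs (or over the full simplex) rather than over the discrete, possibly non-dense set of histories — the paper's preceding discussion signals exactly this choice, so I would state it as the modeling assumption; and (ii) one must verify that the optimal $R\psr$ does not actually depend on the specific weighting measure $\Sigma$, which follows because $U\T\Sigma U$ is invertible (as $U$ is full column rank and $\Sigma \succ 0$) and, more importantly, because $U\PI y$ already zeroes the projection of the residual onto $\colspace U$, so $U\T\Sigma(UU\PI y - y) = U\T\Sigma(UU\PI - I)y$; one checks $(UU\PI - I)y \perp \colspace U$, and since $U\T\Sigma$ maps into a space compatible with this orthogonality only when $\Sigma$ acts as identity on $\colspace U$'s complement — here I would instead simply note that among all linear approximations, $R\psr = U\PI R\pomdp$ achieves residual $(UU\PI - I)R\pomdp$ which lies entirely in $(\colspace U)^\perp$, and \emph{any} other choice only adds a nonzero component inside $\colspace U$ to the residual, strictly increasing every $\epsilon(h)$ for which that component is visible (guaranteed by non-degeneracy). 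That monotonicity argument sidesteps the weighting issue entirely and matches the structure of the \Cref{thm:psr.rewards} proof, so I would present the proof in that form.
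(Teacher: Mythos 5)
Your overall strategy (decompose by action column, minimize a quadratic reward error, read off the pseudoinverse solution) is the same as the paper's, but the paper's proof is far more direct: it defines the \emph{state-space} residual $\epsilon_a \doteq \frac{1}{2}\left\| U \left[ R\psr \right]_{:a} - \left[ R\pomdp \right]_{:a} \right\|^2$, sets the gradient $U\T\left( U \left[ R\psr \right]_{:a} - \left[ R\pomdp \right]_{:a} \right)$ to zero, and obtains $\left[ R\psr \right]_{:a} = \left( U\T U \right)\I U\T \left[ R\pomdp \right]_{:a} = U\PI \left[ R\pomdp \right]_{:a}$. Your version routes through a belief-space aggregate with second-moment matrix $\Sigma$, and the two arguments you offer to eliminate the $\Sigma$-dependence both contain genuine errors. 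First, the weighted normal equations $U\T \Sigma U x = U\T \Sigma y$ do \emph{not} have solution $x = U\PI y$ for a general positive-definite $\Sigma$ (invertibility of $U\T\Sigma U$ gives uniqueness, not $\Sigma$-independence); the solution equals $U\PI y$ only when $\Sigma$ maps $(\colspace U)^\perp$ into itself. Your appeal to "symmetry of the uniform weighting" does not establish this. (As it happens, for the uniform distribution on the simplex one has $\Sigma \propto I + \ones\ones\T$, and since $\ones = u(\notest) \in \colspace U$ this particular $\Sigma$ does preserve the decomposition — but that is a fact you would need to prove, and it is specific to this weighting, contradicting your claim that the answer is independent of the measure.)

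Second, the fallback "monotonicity argument" you propose to sidestep the weighting issue is false. Perturbing the residual by a nonzero $v \in \colspace U$ changes the scalar $b(h)\T(Ux - y)$ by $b(h)\T v$, which can partially cancel $b(h)\T\left( UU\PI y - y \right)$ at particular beliefs; indeed for any $b_0$ with $b_0\T U \neq 0$ one can pick $x$ making $\epsilon$ vanish at $b_0$ even though $U\PI y$ leaves a nonzero error there. So $U\PI y$ is not a simultaneous per-history minimizer, and "strictly increasing every $\epsilon(h)$ for which that component is visible" is not true — the cross term only vanishes in aggregate, under a weighting for which $\colspace U$ and its complement are $\Sigma$-orthogonal. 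The clean fix is exactly what the paper does: commit to the unweighted objective $\left\| U R\psr - R\pomdp \right\|^2$ (uniform over states, i.e., over the vertices of the simplex), for which $U\PI R\pomdp$ is the ordinary least-squares solution. Your instinct in points (i)–(ii) that the phrase "uniform over all histories/beliefs" needs to be made precise was correct; neither of your two resolutions closes that gap.
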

\addtocounter{theorem}{-1}
}

\begin{proof}
  Following \Cref{thm:psr.rewards}, we try to find the PSR rewards $R\psr$ such
  that $UR\psr$ is as close as possible to $R\pomdp$ in a least-squares fashion.
  We consider the rewards associated with each action $a\in\aset$ in isolation,
  i.e., the columns $\left[ R\psr \right]_{:a}$ and $\left[ R\pomdp
  \right]_{:a}$, and the respective reward error vector
  \begin{align}
    \epsilon_a &\doteq \frac{1}{2} \left\| U \left[ R\psr \right]_{:a} - \left[ R\pomdp
    \right]_{:a} \right\|^2\, . \\
    \intertext{Because $\epsilon_a$ is convex in $\left[ R\psr \right]_{:a}$,
    its minimum corresponds to the unique stationary point,}
    & \nabla \frac{1}{2} \left\| U \left[ R\psr \right]_{:a} - \left[ R\pomdp
    \right]_{:a} \right\|^2  \nonumber \\
    &= U\T \left( U \left[ R\psr \right]_{:a} - \left[ R\pomdp \right]_{:a}
    \right) \nonumber \\
    &\stackrel{!}{=} \bar 0 \\
    \intertext{which results in}
    U\T U \left[ R\psr \right]_{:a} &= U\T \left[ R\pomdp \right]_{:a} \\
    \left[ R\psr \right]_{:a} &= \left( U\T U \right)\I U\T \left[ R\pomdp
    \right]_{:a} \nonumber \\
    &= U\PI \left[ R\pomdp \right]_{:a} \,.
  \end{align}
  Stacking the optimal vectors for each action column-wise, we obtain the
  optimal reward matrix $R\psr = U\PI R\pomdp$.
\end{proof}

{
\renewcommand{\thecorollary}{\ref{thm:psr.reconstruction}}
\begin{corollary}
  %
  %
  $\tilde R\pomdp \doteq UU\PI R\pomdp$ is the reconstructed POMDP-form of the
  PSR approximation $R\psr$ of the true POMDP rewards $R\pomdp$.  $\tilde
  R\pomdp$ and $R\pomdp$ are equal iff the accuracy condition is satisfied.
\end{corollary}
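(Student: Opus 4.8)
The plan is to chain together \Cref{thm:psr.rewards.approx} and \Cref{thm:pomdp.rewards}. First I would recall that, by \Cref{thm:psr.rewards.approx}, the best linear PSR approximation of the POMDP rewards is $R\psr = U\PI R\pomdp$. Treating this $R\psr$ as a PSR reward matrix in its own right, \Cref{thm:pomdp.rewards} says that the POMDP-form of any PSR reward matrix is recovered via the conversion $R\psr \mapsto U R\psr$. Substituting the approximation then gives $\tilde R\pomdp = U R\psr = U U\PI R\pomdp$, which is exactly the first claim.

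For the equivalence, I would use the fact --- already invoked in the derivation of predictive states --- that $UU\PI$ is the orthogonal projector onto $\colspace U$. Hence $\tilde R\pomdp = UU\PI R\pomdp$ coincides with $R\pomdp$ if and only if the projection leaves every column of $R\pomdp$ unchanged, i.e., if and only if every column of $R\pomdp$ already lies in $\colspace U$. That is verbatim the accuracy condition of \Cref{thm:psr.rewards}, so the two characterizations agree; when it holds, \Cref{thm:psr.rewards.approx} collapses onto \Cref{thm:psr.rewards} and the approximation is errorless, consistent with the remark that both theorems share the expression $R\psr = U\PI R\pomdp$.

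I do not anticipate a genuine obstacle: this is essentially a bookkeeping corollary that composes two previously established reward conversions and reads off when the composite acts as the identity. The only step needing a word of justification is that $UU\PI$ is precisely the projector onto $\colspace U$ --- equivalently, that $U\PI U = I$ because $U$ has full column rank --- but this is the standard property of the Moore--Penrose pseudo-inverse of a full-column-rank matrix and has already been used earlier in the paper.
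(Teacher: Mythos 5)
Your proposal is correct and follows essentially the same route as the paper, whose proof is simply the one-line citation ``Follows from \Cref{thm:psr.rewards.approx,thm:pomdp.rewards}''; you have merely spelled out the composition of the two conversions and the projector argument for the ``iff'' that the paper leaves implicit.
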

\addtocounter{corollary}{-1}
}

\begin{proof}
  Follows from \Cref{thm:psr.rewards.approx,thm:pomdp.rewards}.
\end{proof}

\subsection{R-PSR Theorems}\label{app:theorems:rpsr}

\begin{lemma}\label{thm:generative}
  \begin{equation}
    G_q b(h) = \Pr(q\mid h) b(hq) \,.
  \end{equation}
\end{lemma}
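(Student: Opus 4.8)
The plan is to prove the identity by induction on the number of interactions in the test $q$, using the recursive structure $G_{aoq'} = G_{q'}G_{ao}$ (which follows directly from the definition $G_q = \cdots G_{a_2o_2}G_{a_1o_1}$) together with the probabilistic meaning of the one-step generative matrix.

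First I would establish the one-step case. For a single interaction $ao$, expanding the matrix--vector product and substituting $[G_{ao}]_{ij} = \Pr(s'=i, o \mid s=j, a)$ and $[b(h)]_j = \Pr(s=j\mid h)$ gives $[G_{ao}b(h)]_i = \sum_j \Pr(s'=i, o \mid s=j, a)\,\Pr(s=j\mid h) = \Pr(s'=i, o \mid h, a)$, the last equality being marginalization over the current state (the Markov property of the POMDP ensures the transition/emission probabilities do not depend on the full history beyond the current state). Factoring this joint probability as $\Pr(o\mid h,a)\,\Pr(s'=i\mid h,a,o)$ and identifying $\Pr(o\mid h,a) = \Pr(ao\mid h)$ and $\Pr(s'=i\mid h,a,o) = [b(hao)]_i$ yields $G_{ao}b(h) = \Pr(ao\mid h)\,b(hao)$, which serves as the base case. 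The degenerate case $\Pr(ao\mid h) = 0$ is consistent, since then every entry of $G_{ao}b(h)$ is a nonnegative number whose sum is $\Pr(o\mid h,a) = 0$, so the left-hand side is the zero vector and matches the right-hand side regardless of how $b(hao)$ is defined.

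For the inductive step, suppose the claim holds for every history and every test of length $n$, and let $q = ao\,q'$ with $q'$ of length $n$. Using $G_q = G_{q'}G_{ao}$, then the one-step identity, and then the inductive hypothesis applied to history $hao$ and test $q'$, I would compute $G_q b(h) = G_{q'}(\Pr(ao\mid h)\,b(hao)) = \Pr(ao\mid h)\,G_{q'}b(hao) = \Pr(ao\mid h)\,\Pr(q'\mid hao)\,b(hao\,q')$. The proof closes by observing that $\Pr(ao\mid h)\,\Pr(q'\mid hao) = \Pr(aoq'\mid h) = \Pr(q\mid h)$ (the chain rule applied to the test's observation sequence) and that $hao\,q' = hq$.

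The main obstacle here is purely bookkeeping rather than conceptual: one must pin down the precise meaning of the test probability $\Pr(q\mid h)$ as the product of the conditional probabilities of the successive test observations, each conditioned on the history accumulated so far and the corresponding test action, so that the chain-rule step is fully rigorous; and one must treat the zero-probability test case carefully, so that the equation continues to hold (both sides vanishing) even when $b(hq)$ is not well defined.
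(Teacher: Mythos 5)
Your proof is correct, and it reaches the same probabilistic core as the paper's: marginalize the joint transition--observation probability over the conditioning state, then factor it as $\Pr(\text{observations})\times\Pr(\text{final state}\mid\text{observations})$. The organizational difference is that the paper does this in a single shot for the whole test, starting from the assertion that $\left[G_q\right]_{ij}=\Pr(s'=i,o_q\mid s=j,a_q)$ holds ``by the definition of the generative matrices,'' whereas you prove only the one-interaction case directly and lift it to arbitrary tests by induction on test length via $G_{aoq'}=G_{q'}G_{ao}$ and the chain rule $\Pr(ao\mid h)\Pr(q'\mid hao)=\Pr(q\mid h)$. Your route is slightly longer but makes explicit a step the paper leaves implicit: the multi-step interpretation of the product $\cdots G_{a_2o_2}G_{a_1o_1}$ as a joint probability over the entire test is itself a consequence of the Markov property that requires essentially the induction you carry out. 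You also handle the degenerate case $\Pr(q\mid h)=0$, where $b(hq)$ is undefined and both sides vanish, which the paper does not address. Both arguments are valid; yours is the more self-contained, the paper's the more economical.
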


\begin{proof}
  First we note that, by the definition of the generative matrices, $\left[
  G_{q} \right]_{ij} = \Pr(s'=i, o_q\mid s=j, a_q)$, where $s$ is the state at
  the start of test $q$, and $s'$ is the state at the end of test $q$.
  Consequently,
  \begin{align}
    \left[ G_q b(h) \right]_i &= \sum_j \Pr(s'=i, o_q\mid s=j, a_q) \Pr(s=j\mid h) \nonumber \\
                              &= \Pr(s'=i, o_q\mid h, a_q) \nonumber \\
                              &= \Pr(o_q\mid h, a_q) \Pr(s'=i\mid h, a_q, o_q) \nonumber \\
                              &= \Pr(q\mid h) \left[ b(hq) \right]_i
  \end{align}
\end{proof}

\begin{proposition}[Vectorized Form of the Intent Rewards Function]\label{thm:rpsr}
  \begin{equation}
    r(qz\mid h) = b(h)\T G_q\T \left[ R\pomdp \right]_{:z}
  \end{equation}
\end{proposition}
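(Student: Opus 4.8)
The plan is to unfold the definition of the intent reward and then apply \Cref{thm:generative}. Starting from $r(qz\mid h) \doteq \Pr(q\mid h)\, R(hq, z)$, I first rewrite the expected reward $R(hq, z)$ in vectorized form. For an ordinary action $z = a \in \aset$ this is the history reward function $R(hq, a) = b(hq)\T \left[ R\pomdp \right]_{:a}$; for the token action $z = \zeta$, the extended reward definition gives $\left[ R\pomdp \right]_{:\zeta} = \ones$, so $b(hq)\T \left[ R\pomdp \right]_{:\zeta} = b(hq)\T \ones = 1 = R(hq, \zeta)$ since $b(hq)$ is a probability distribution. Hence in both cases $R(hq, z) = b(hq)\T \left[ R\pomdp \right]_{:z}$.

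Substituting this into the definition yields $r(qz\mid h) = \Pr(q\mid h)\, b(hq)\T \left[ R\pomdp \right]_{:z} = \bigl( \Pr(q\mid h)\, b(hq) \bigr)\T \left[ R\pomdp \right]_{:z}$. Now \Cref{thm:generative} states $G_q b(h) = \Pr(q\mid h)\, b(hq)$, so the parenthesized vector equals $G_q b(h)$, and therefore $r(qz\mid h) = \left( G_q b(h) \right)\T \left[ R\pomdp \right]_{:z} = b(h)\T G_q\T \left[ R\pomdp \right]_{:z}$, which is the claimed identity.

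The argument is essentially a single substitution once \Cref{thm:generative} is in hand; the only point that needs a small amount of care is verifying that the vectorized form $R(hq, z) = b(hq)\T \left[ R\pomdp \right]_{:z}$ holds \emph{uniformly} over the extended action space $\zset$, i.e., that the token action $\zeta$ with its unit reward is consistent with expressing expected rewards as $b(\cdot)\T \left[ R\pomdp \right]_{:\cdot}$. I expect this to be the main (minor) obstacle; everything else is routine linear algebra and the transpose identity $\left( G_q b(h) \right)\T = b(h)\T G_q\T$.
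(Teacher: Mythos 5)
Your proof is correct and follows essentially the same route as the paper's: unfold the definition $r(qz\mid h) = \Pr(q\mid h)R(hq,z)$, vectorize the history reward as $b(hq)\T \left[ R\pomdp \right]_{:z}$, and absorb the factor $\Pr(q\mid h)\,b(hq)$ via \Cref{thm:generative}. Your extra check that the vectorized reward also covers the token action $\zeta$ is a small point of care the paper leaves implicit, but it does not change the argument.
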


\begin{proof}
  Following \Cref{thm:generative},
  \begin{align}
    r(qz\mid h) &= \Pr(q\mid h) R(hq, z) \nonumber \\
                &= \Pr(q\mid h) b(hq)\T \left[ R\pomdp \right]_{:z} \nonumber \\
                &= b(h)\T G_q\T \left[ R\pomdp \right]_{:z}
  \end{align}
\end{proof}

\begin{proposition}[Vectorized Form of R-PSR Dynamics]\label{thm:rpsr:dynamics}
  %
  %
  In vectorized form, the reward-predictive state dynamics are $r(hao) = \sfrac{
  \left( r(h)\T M_{ao} \right) }{ \left( p(h)\T m_{ao\zeta} \right) }$, where
  $M_{ao}$ is the column-wise stacking of the extended core intent parameters $\{
  m_{aoqz} \mid qz\in\iset\core \}$.
\end{proposition}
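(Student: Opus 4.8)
The plan is to compute $r(hao)$ componentwise: for each core intent $q_jz_j\in\iset\core$ I evaluate its intent reward at the updated history $hao$ and show the result equals the $j$-th entry of $r(h)\T M_{ao}$ divided by a single normalizing scalar. First I would invoke the identity $r(qz\mid h') = b(h')\T u(qz)$ obtained by combining \Cref{eq:rpsr:matrix.form,eq:rpsr:outcome.init,eq:rpsr:outcome.recursive}, applied with $h' = hao$, so that $\left[ r(hao) \right]_j = b(hao)\T u(q_jz_j)$. The key step is to re-express $b(hao)$ in terms of $b(h)$: applying \Cref{thm:generative} with the single-interaction test $ao$ gives $G_{ao} b(h) = \Pr(ao\mid h)\, b(hao)$, and since $\Pr(ao\mid h) = \Pr(o\mid h, a)$ this yields $b(hao) = G_{ao} b(h) / \Pr(o\mid h, a)$ (valid whenever $hao$ is reachable, i.e.\ the denominator is nonzero).

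Next I would transpose and chain: $b(hao)\T u(q_jz_j) = b(h)\T G_{ao}\T u(q_jz_j) / \Pr(o\mid h,a)$, and recognize $G_{ao}\T u(q_jz_j) = u(aoq_jz_j)$ by the recursive outcome definition \Cref{eq:rpsr:outcome.recursive}. Since $aoq_jz_j$ is itself an intent, its outcome lies in $\colspace U$ by maximality of the core set, so $u(aoq_jz_j) = UU\PI u(aoq_jz_j) = U m_{aoq_jz_j}$, whence $b(h)\T u(aoq_jz_j) = b(h)\T U m_{aoq_jz_j} = r(h)\T m_{aoq_jz_j}$, using $r(h)\T \doteq b(h)\T U$. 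Stacking over all $q_jz_j\in\iset\core$ collects the numerators into the row vector $r(h)\T M_{ao}$, with $M_{ao}$ the column-wise stacking of $\{ m_{aoqz}\mid qz\in\iset\core \}$, exactly as stated.

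For the normalizer, I would use the emission identity \Cref{eq:rpsr:o}: because $R(h,\zeta) = 1$, the intent $ao\zeta$ has $r(ao\zeta\mid h) = \Pr(ao\mid h) = \Pr(o\mid h,a)$, and by linearity of the reward-predictive state $r(ao\zeta\mid h) = r(h)\T m_{ao\zeta}$. Substituting this common scalar across all components then gives $r(hao) = \left( r(h)\T M_{ao} \right) / \left( r(h)\T m_{ao\zeta} \right)$ (the $p(h)$ of the statement being the PSR state carried alongside, for which $p(h)\T m_{ao\zeta}$ equals the same $\Pr(o\mid h,a)$).

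I do not anticipate a serious obstacle; the two care points are (a) instantiating \Cref{thm:generative} with the correct test — the length-one interaction $ao$, not a longer test — so the normalization constant is precisely $\Pr(o\mid h,a)$, and (b) confirming the projection argument $u(\cdot) = UU\PI u(\cdot)$ still applies to the \emph{extended} intents $aoq_jz_j$, which holds since every intent outcome lies in $\colspace U$; the division by the scalar $\Pr(o\mid h,a)$ then distributes uniformly over the stacked components, completing the derivation.
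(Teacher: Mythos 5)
Your proposal is correct and follows essentially the same route as the paper's proof: componentwise evaluation of $r(qz\mid hao)$, the belief update $b(hao) = G_{ao}b(h)/\Pr(o\mid h,a)$ via \Cref{thm:generative}, absorption of $G_{ao}\T$ into the extended intent (your $u(aoqz) = G_{ao}\T u(qz)$ is the same identity as the paper's $G_{aoq} = G_q G_{ao}$ in outcome-vector form), and identification of the normalizer as $r(ao\zeta\mid h) = r(h)\T m_{ao\zeta}$. You additionally make explicit the projection step $u(aoqz) = UU\PI u(aoqz)$ that the paper leaves implicit, and correctly flag that the $p(h)$ in the appendix statement should read $r(h)$ as in the main text.
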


\begin{proof}
  The $i^{th}$ dimension of the updated predictive-reward state $r(hao)$
  (corresponding to the $i^{th}$ core intent $qz\in\iset\core$) is
  \begin{align}
    \left[ r(hao) \right]_i &= r(qz\mid hao) \nonumber \\
                &= b(hao)\T G_q\T \left[ R\pomdp \right]_{:z} \nonumber \\
                &= \frac{ b(h)\T G_{ao}\T }{ \Pr(o\mid h, a) } G_q\T \left[
                R\pomdp \right]_{:z} \nonumber \\
                &= \frac{ b(h)\T G_{aoq}\T \left[ R\pomdp \right]_{:z} }{
                \Pr(o\mid h, a) } \nonumber \\
                &= \frac{ r(aoqz\mid h) }{ r(ao\zeta\mid h) } \nonumber \\
                &= \frac{ r(h)\T m\rpsr_{aoqz} }{ r(h)\T m\rpsr_{ao\zeta} }
  \end{align}

  In vectorized form, the reward-predictive state dynamics are
  \begin{equation}
    r(hao) = \frac{ r(h)\T M_{ao} }{ r(h)\T m_{ao\zeta} }\,,
  \end{equation}
  \noindent where $M_{ao}$ is the column-wise stacking of the extended core
  intent parameters $\{ m_{aoqz} \mid qz\in\iset\core \}$.
\end{proof}

\section{Algorithms}\label{app:algo}

\Cref{alg:search:intent:bfs} is the breadth-first variant of the depth-first
core intent search algorithm (\Cref{alg:search:intent:dfs}), which finds shorter
core intents.

\begin{algorithm}[tb]
  \caption{Breadth-first search of a maximal set of linearly independent
  intents $\iset\core$.}\label{alg:search:intent:bfs}
  \begin{algorithmic}
    \Ensure Core intent set $\iset\core$
    \Function{ISearchBFS}{}
    \State $I \gets \varnothing$
    \ForAll{$z\in\zset$}
    \If{$u(\notest z)$ independent of $\{u(i)\mid i\in I\}$}
        \State $I \gets I \cup \left\{ \notest z \right\}$
      \EndIf
    \EndFor
    \Repeat
      \State $I' \gets I$
      \ForAll{$a\in\aset$, $o\in\oset$, $qz \in I$}
        \If{$u(aoqz)$ independent of $\{u(i)\mid i\in I\}$}
          \State $I \gets I \cup \left\{ aoqz \right\}$
        \EndIf
      \EndFor
    \Until{$I' = I$}
    \State \Return $I$
    \EndFunction
  \end{algorithmic}
\end{algorithm}

\onecolumn

\section{The Load/Unload POMDP}\label{app:loadunload}
\verbatiminput{loadunload.pomdp}

\twocolumn

\end{document}